\documentclass{amsart}
\usepackage{amsaddr}
\usepackage{amssymb,amsthm}
\usepackage{acronym}
\usepackage{epsfig}
\usepackage[acronym,smallcaps]{glossaries}
\usepackage{lineno,hyperref}
\usepackage{algorithm}
\usepackage{dsfont}
\usepackage{multirow}
\usepackage{arydshln}
\usepackage{subfig}
\usepackage{color}
\usepackage{amsmath}
\usepackage{nomencl}

\newtheorem{theorem}{Theorem}
\newtheorem{corollary}[theorem]{Corollary}

\newtheorem{proposition}[theorem]{Proposition}
\newtheorem{definition}[theorem]{Definition}

\newacronym{AD}{AD}{alternating diffusion}
\newacronym{DM}{DM}{diffusion maps}
\newacronym{RP}{RP}{random projection}

\begin{document}

\title[]{Alternating Diffusion-Maps for multimodal data fusion}
\maketitle




\begin{center}
\normalsize
Ori Katz\textsuperscript{a}, Ronen Talmon\textsuperscript{a},
Yu-Lun Lo\textsuperscript{b} and Hau-Tieng Wu\textsuperscript{c} \par \bigskip

\textsuperscript{a} Viterbi Faculty of Electrical Engineering, Technion -- Israel Institute of Technology, Israel \par
\textsuperscript{b} Department of thoracic medicine, Chang Gung Memorial Hospital, Chang Gung University, School of Medicine, Taipei, Taiwan \par
\textsuperscript{c} Department of Mathematics, University of Toronto, Ontario, Canada \par \bigskip

\end{center}

\begin{abstract}

The problem of information fusion from multiple data-sets acquired by multimodal sensors has drawn significant research attention over the years. In this paper, we focus on a particular problem setting consisting of a physical phenomenon or a system of interest observed by multiple sensors. We assume that all sensors measure some aspects of the system of interest with additional sensor-specific and irrelevant components. Our goal is to recover the variables relevant to the observed system and to filter out the nuisance effects of the sensor-specific variables.
We propose an approach based on manifold learning, which is particularly suitable for problems with multiple modalities, since it aims to capture the intrinsic structure of the data and relies on minimal prior model knowledge.
Specifically, we propose a nonlinear filtering scheme, which extracts the hidden sources of variability captured by two or more sensors, that are independent of the sensor-specific components.
In addition to presenting a theoretical analysis, we demonstrate our technique on real measured data for the purpose of sleep stage assessment based on multiple, multimodal sensor measurements. We show that without prior knowledge on the different modalities and on the measured system, our method gives rise to a data-driven representation that is well correlated with the underlying sleep process and is robust to noise and sensor-specific effects.
\end{abstract}

\section{Introduction}
Often, when measuring a phenomenon of interest that arises from a complex dynamical system, a single data acquisition method is not capable of capturing its entire complexity and characteristics, and it is usually prone to noise and interferences. Recently, due to technological advances, the use of multiple types of measurement instruments and sensors have become more and more popular; nowadays, such equipment is smaller, less expensive, and can be mounted on every-day products and devices more easily.
In contrast to a single sensor, multimodal sensors may capture complementary aspects and features of the measured phenomenon, and may enable us to extract a more reliable and detailed description of the measured phenomenon.

The vast progress in the acquisition of multimodal data calls for the development of analysis and processing tools, which appropriately combine data from the different sensors and handle well the inherent challenges that arise.
One particular challenge is related to the heterogeneity of the data acquired in the different modalities;
datasets acquired from different sensors may comprise different sources of variability, where only few are relevant to the phenomenon of interest.
This particular challenge as well as many others have been the subject of many studies. For a recent comprehensive reviews, see \cite{khaleghi2013multisensor,lahat2015multimodal,gravina2017multi}.

In this paper we consider a setting in which a physical phenomenon is measured by multiple sensors. While all sensors measure the same phenomenon, each sensor consists of different sources of variability;
some are related to the phenomenon of interest, possibly capturing its various aspects, whereas other sources of variability are sensor-specific and irrelevant.
We present an approach based on manifold learning, which is a class of nonlinear data-driven methods, e.g. \cite{Tenenbaum2000,Roweis2000,Donoho2003,Belkin_Niyogi:2003}, and specifically, we use the framework of \gls{DM} \cite{Coifman_Lafon:2006}.
On the one hand, manifold learning is particularly suitable for problems with multiple modalities since it aims to capture the intrinsic
geometric structure of the underlying data and relies on minimal prior model knowledge. This enables to handle multimodal data in a systematic manner, without the need to specially tailor a solution for each modality.
On the other hand, applying manifold learning to data acquired in multiple (multimodal) sensors may capture undesired/nuisance geometric structures as well. Recently, several manifold learning techniques for multimodal data have been proposed \cite{davenport2010joint,keller_audio_visual_2010,yair2017local,salhov2016multi}.
In \cite{davenport2010joint}, the authors suggest to concatenate the samples acquired by different sensors into unified vectors. However this approach is sensitive to the scaling of each dataset, which might be especially diverse among datasets acquired by different modalities.
To alleviate this problem, it is proposed in \cite{keller_audio_visual_2010} to use \gls{DM} to obtain ``standardized'' representation of each dataset separately, and then to concatenate these ``standardized'' representations into the unified vectors.
Despite handling better multimodal data, this concatenation scheme does not utilize the mutual relations and co-dependencies that might exist between the datasets.

While methods such as \cite{davenport2010joint,keller_audio_visual_2010,salhov2016multi} take into account all the measured information,
the methods in \cite{lederman2015alternating,talmon2016latent,yair2017local,lai2000kernel} use local kernels to implement nonlinear filtering.
Specifically, following a recent line of study in which multiple kernels are constructed and combined \cite{de_sa_spectral_2005,de_sa_multi_view_2010,boots_two_manifold_2012,michaeli2015nonparametric}, in \cite{lederman2015alternating,talmon2016latent}, it was shown that a method based on alternating applications of diffusion operators extracts only the common source of variability among the sensors, while filtering out the sensor-specific components.
Therefore we choose to establish our framework based on \gls{DM} which relies on those theoretical foundations. Other nonlinear methods, such as \cite{yair2017local,lai2000kernel}, do not have that theoretical assurance of convergence to an operator that extract the common part, but may also be suitable as a framework. Those methods can be tested and compared empirically with the proposed \gls{DM} based framework in future work.
The shortcoming of alternating applications of diffusion operators arises when having a large number of sensors; often, sensors that measure the same system capture different information and aspects of that system. As a result, the common source of variability among all the sensors captures only a partial or empty look of the system, and important relevant information may be undesirably filtered out.

Here, we address the tradeoff between these two approaches. That is, we aim to maintain the relevant information captured by multiple sensors, while filtering out the nuisance components.
Since the relevance of the various components is unknown, our main assumption is that the sources of variability which are measured only in a single sensor, i.e., sensor-specific, are nuisance. Conversely, we assume that components measured in two or more sensors are of interest.
Importantly, such an approach implements implicitly a smart ``sensor selection''; ``bad'' sensors that are, for example malfunctioned and measure only nuisance information, are automatically filtered out.
These assumptions stem from the fact that the phenomenon of interest is global and not specific to one sensor.
We propose a nonlinear filtering scheme, in which only the sensor-specific sources of variability are filtered out while the sources of
variability captured by two or more sensors are preserved.

Based on prior theoretical results \cite{lederman2015alternating,talmon2016latent}, we show that our scheme indeed accomplishes this task. We illustrate the main features of our method on a toy problem.
In addition, we demonstrate its performance on real measured data in an application for sleep stage assessment based on multiple, multimodal sensor measurements.
Sleep is a global phenomenon with systematic physiological dynamics that represents a recurring non-stationary state of mind and body.
Sleep evolves in time and embodies interactions between different subsystems, not solely limited in the brain.
Thus, in addition to the well-known patterns in electroencephalogram (EEG) signals, its complicated dynamics are manifested in other sensors such as sensors measuring breathing patterns, muscle tones and muscular activity, eyeball movements, etc.
Each one of the sensors is characterized by different structures and affected by numerous nuisance processes as well.
In other words, while we could extract the sleep dynamics by analyzing different sensors, each sensor captures only part of the entire sleep process, whereas it introduces modality artifacts, noise, and interferences.
We show that our scheme allows for an accurate systematic sleep stage identification based on multiple EEG recordings as well as multimodal respiration measurements.
In addition, we demonstrate its capability to perform sensor selection by artificially adding noise sensors.

The remainder of the paper is organized as follows. In Section \ref{sec:Problem_formulation} we present a formulation for the common source extraction problem and present an illustrative toy problem.
In Section \ref{sec:Algorithm}, a brief review for the method proposed in \cite{lederman2015alternating,talmon2016latent} is outlined, and then, a detailed description and interpretation of the proposed scheme are presented.
In Section \ref{sec:Simulation Results}, we first demonstrate the capabilities of the proposed scheme on the toy problem introduced in Section \ref{sec:Problem_formulation}.
Then, in Section \ref{sec:Sleep Results}, we demonstrate the performance in sleep stage identification based on multimodal measured data recorded in a sleep clinic.
Finally, in Section \ref{sec:Conclusions}, we outline several conclusions.

\section{Problem Setting}
\label{sec:Problem_formulation}

Consider a system driven by a set of $K$ hidden random variables $\Theta =\{\boldsymbol{\theta}^{(1)},\boldsymbol{\theta}^{(2)},\ldots,\boldsymbol{\theta}^{(K)}\}$, where $\boldsymbol{\theta}^{(k)} \in \mathbb{R}^{d_k}$.
The system is measured by $M$ observable variables $\boldsymbol{s}^{(m)}, \ m=1,\dots,M$,
where each sensor has access to only a partial view of the entire system and its driving variables $\Theta$.
To formulate it, we define a ``sensitivity table'' given by the binary matrix $\mathbf{S} \in \mathbb{Z}^{K\times M}_2$, indicating the variables sensed by each observable variable.
Specifically, the $(k,m)$th element in $\mathbf{S}$ indicates whether the hidden variable $\boldsymbol{\theta}^{(k)}$ is measured by the observable variable $\boldsymbol{s}^{(m)}$. It should be noted that this binary notation is a rough simplification since that there are soft degrees of observability. However, at least theoretically, when we have sufficeint data the algorithm is guaranteed to work for any bilishitz observation function. When the data amount is limited, those degrees of observability are dominated by the Lipschitz constants and the signal-to-noise ratio. Some of these observalities issues were treated in \cite{dsilva2015data} .Further quantification of those parameters for the derivation of soft observability scores is beyond the scope of this article and may be addressed in a future work.
The observable variables are therefore given by
\begin{equation}\label{eq:observer_formula}
  	\boldsymbol{s}^{(m)}=h_{m}(\Theta^{(m)},\boldsymbol{n}^{(m)}) \in \mathbb{R}^{D_m}
\end{equation}
where $h_m(\cdot)$ is a bilipschitz observation function, $\boldsymbol{n}^{(m)} \in \mathbb{R}^{p_m}$ are hidden random variables captured only by the $m$th observable variable, and $\Theta^{(m)}$ is the subset of driving hidden variables of interest sensed by $\boldsymbol{s}^{(m)}$, given by
\begin{equation}\label{eq:Theta_subset_def}
	\Theta^{(m)} = \left\{ \boldsymbol{\theta}^{(k)} | \forall k, S_{k,m}=1 \right\} \subseteq \Theta , m=1,\ldots,M
\end{equation}
The random hidden variables $\boldsymbol{n}^{(m)}$  are \textit{sensor-specific} (associated only with the $m$th observer). They are conditionally independent given the hidden variables of interest and will be assumed as noise/nuisance variables.
We further assume that each random hidden variable in $\Theta$ is measured by at least two observable variables, such that $\sum_{m=1}^{M}{S_{k,m}}\geq 2$ for each $k=1,\dots,K$.
As a result, we refer to the hidden variables $\boldsymbol{\theta}^{(k)}$ in $\Theta$ as {\em common variables}.

In order to simplify the notation, we denote the subset of all hidden variables (both common and sensor-specific) measured by the $m$th observable by $\mathcal{S}^{(m)}=\{\Theta^{(m)},\boldsymbol{n}^{(m)}\}$.
Furthermore, we assume that the dimensions of the observations and the hidden variables satisfy
\begin{equation}\label{eq:dim_reqirement}
	D_m \geq \sum_{\boldsymbol{\theta}^{(k)} \in \Theta^{(m)}}{(d_k+p_k)}, \  k=1,2,...,M
\end{equation}
i.e., the observations are in higher dimension than the hidden common and nuisance variables.

\begin{table}[t]
	\caption{List of important notation.}
\begin{tabular}{ |p{3cm} p{10cm}| }
	\hline
	\multicolumn{2}{|l|}{Nomenclature} \\
	\hline
	\hline
	$K$ & number of common hidden variables  \\
	$\boldsymbol{\theta}^{(k)}$ & $k$th common hidden variable \\
	$d_k$  & dimension of $\boldsymbol{\theta}^{(k)}$ \\
	$\Theta$ & set of all common hidden variables\\ 
	$M$ & number of observable variables  \\
	$\boldsymbol{s}^{(m)}$ & $m$th observable variable \\
	$D_m$  & dimension of $\boldsymbol{s}^{(m)}$  \\
	$\mathbf{S}$  &  sensitivity table \\
	$h_m(\cdot)$ & a bilipschitz $m$th observation function   \\
	$\boldsymbol{n}^{(m)}$  &  $m$th sensor-specific hidden (nuisance) variables \\
	$p_m$  &   dimension of $\boldsymbol{n}^{(m)}$\\
	$\Theta^{(m)}$ & subset of $\Theta$ sensed by $\boldsymbol{s}^{(m)}$ \\
	$\mathcal{S}^{(m)}$  & subset of all hidden variables measured by $\boldsymbol{s}^{(m)}$  \\
	\hline
\end{tabular}
\end{table}

An observation of the system denoted as $(\boldsymbol{s}^{(1)}_i,\boldsymbol{s}^{(2)}_i,\ldots,\boldsymbol{s}^{(M)}_i)$ is associated with a realization of the hidden variables $\Theta_i=(\boldsymbol{\theta}^{(1)}_i, \ldots, \boldsymbol{\theta}^{(K)}_i)$  and realizations of the $M$ hidden nuisance variables $(\boldsymbol{n}^{(1)}_i, \ldots, \boldsymbol{n}^{(M)}_i)$.
Given $N$ observation samples $\left\{(\boldsymbol{s}^{(1)}_i,\boldsymbol{s}^{(2)}_i,\ldots,\boldsymbol{s}^{(M)}_i)\right\}_{i=1}^N$, our goal is to obtain a parametrization for the underlying realizations of the common hidden random variables $\left\{ ( \boldsymbol{\theta}^{(1)}_i, \ldots, \boldsymbol{\theta}^{(K)}_i) \right\}_{i=1}^N$ while filtering out the nuisance variables $\left\{ ( \boldsymbol{n}^{(1)}_i, \ldots, \boldsymbol{n}^{(M)}_i ) \right\}_{i=1}^N$.
We note that the observations index $i$ may represent the time index in case of time series.

\subsection{Illustrative toy problem}
\label{ssec:toy_problem}

We illustrate the problem setting using the following toy example. Consider six rotating arrows captured in simultaneous snapshots by three different cameras. We assume that each arrow rotates at different speed, and that each camera can capture only a partial image of the entire system. The partial view of each camera is depicted in Figure~\ref{fig:toy_settings}.
Thus, overall, each camera captures a sequence of snapshots (a movie) of three rotating colored arrows. Further illustration of the entire system and of the captured images by each camera can be seen in the following link \href{https://youtu.be/a-yb7ScdnnA}{https://youtu.be/a-yb7ScdnnA}.

\begin{figure}[t]
\centering
\begin{tabular}{cc}
\hspace{-0.1in} \includegraphics[scale=0.3]{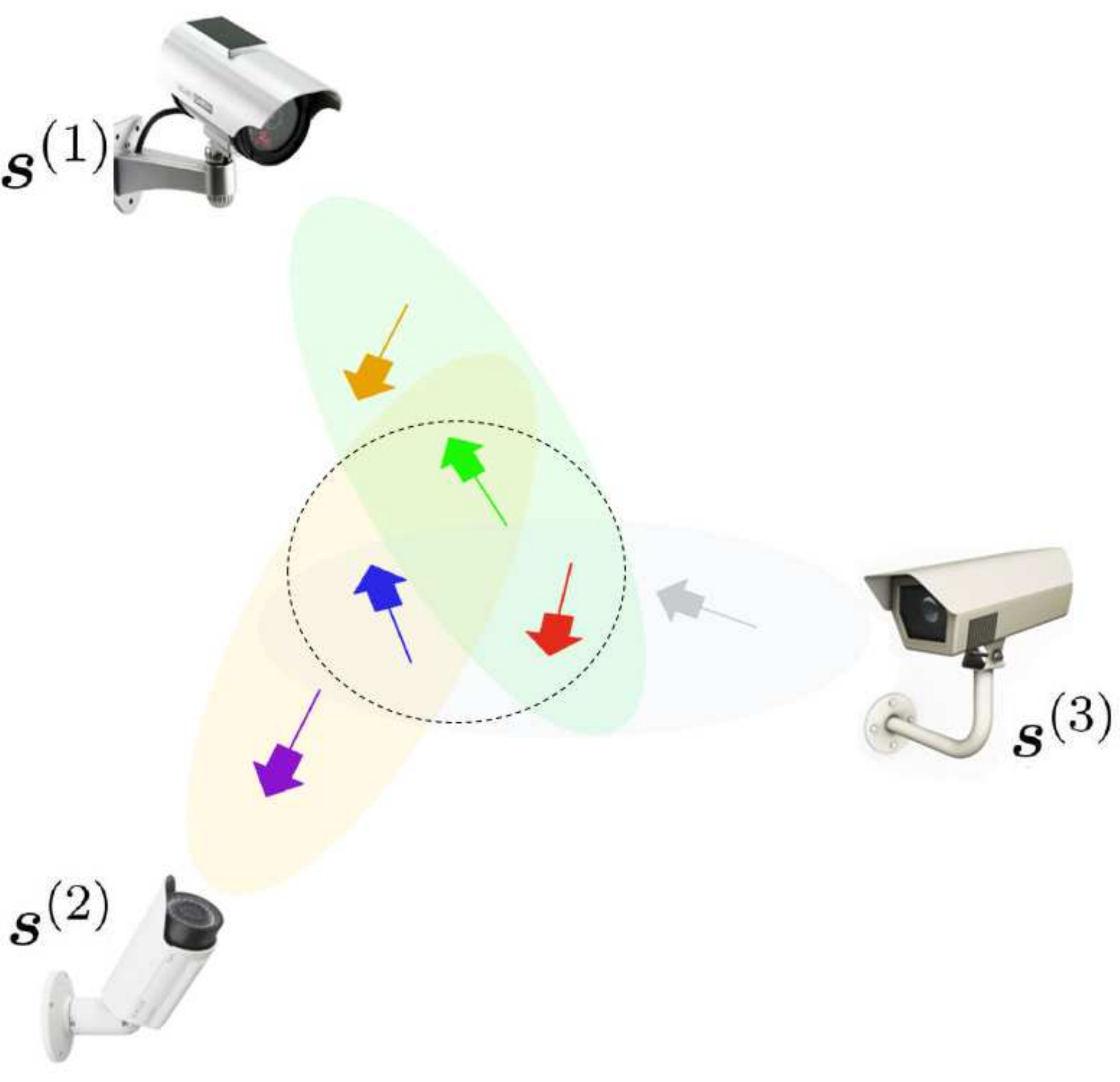} &
\vspace{-.1in} \includegraphics[scale=0.3]{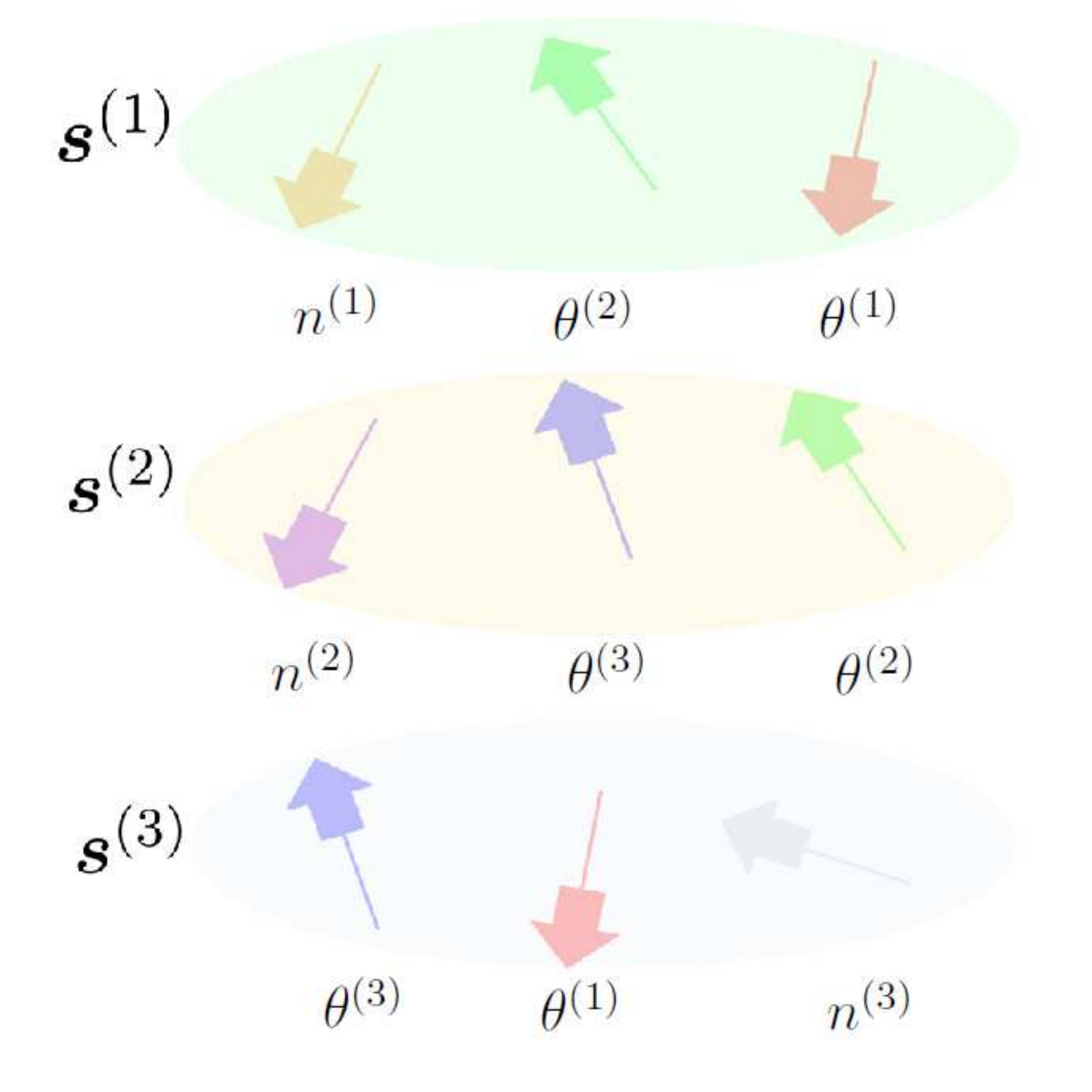}\\
(a) & (b) \\
\end{tabular}
\caption {Toy problem setup. (a) The coverage area of each camera, the system's range of interest is marked by the dashed circle. (b) Sample snapshot taken by each camera.}
\label{fig:toy_settings}
\end{figure}
In this problem setting, the hidden variables are the six rotation angles of the arrows: the common variables $\Theta=\{\theta^{(1)},\theta^{(2)},\theta^{(3)}\}$ are the three rotation angles of the centred arrows, which are marked by the dashed circle in Figure~\ref{fig:toy_settings}, and the nuisance variables $\{n^{(1)},n^{(2)},n^{(3)}\}$ are the three rotation angles of the peripheral arrows, since each is captured only by a single camera. It should be noted that none of the arrows is common to all of the cameras, meaning that the set of common components within the entire set of observables is empty.

In order to identify the hidden variables, we use different colors for the arrows. The arrows rotating according to the common variables $\Theta=\{\theta^{(1)},\theta^{(2)},\theta^{(3)}\}$ are colored in red, green and blue, respectively, and the arrows rotating according to the nuisance variables $\{n^{(1)},n^{(2)},n^{(3)}\}$ are colored in orange, purple and gray, respectively. The hidden variables measured by each camera are $\mathcal{S}^{(1)}=\{\theta^{(1)},\theta^{(2)},n^{(1)}\}$, $\mathcal{S}^{(2)}=\{\theta^{(2)},\theta^{(3)},n^{(2)}\}$ and $\mathcal{S}^{(3)}=\{\theta^{(3)},\theta^{(1)},n^{(3)}\}$. Our goal is to obtain a parametrization of the rotation angles of the three common arrows $\Theta=\{\theta^{(1)},\theta^{(2)},\theta^{(3)}\}$ given the three movies of the cameras, without any prior knowledge on the system and the problem structure. In the sequel, we will use this toy problem for demonstrating important aspects and how our method accomplishes this task.

\section{Nonlinear Filtering Scheme}
\label{sec:Algorithm}
\subsection{Diffusion Maps}
\label{subsec:dm}

\Gls{DM} is a non-linear data-driven dimensionality reduction method \cite{Coifman_Lafon:2006}.
Assume we have $N$ high-dimensional data-points $\left\{\boldsymbol{s}_i\right\}_{i=1}^{N}$.
The \gls{DM} method begins with the calculation of a pairwise affinity matrix based on a local kernel, often using some metric within a gaussian kernel, i.e.,
\begin{equation}\label{eq:DM_affinity_def}
  W_{i,j}=\exp\left(-\frac{\|\boldsymbol{s}_i-\boldsymbol{s}_j\|_{M}^2}{\varepsilon}\right),
\end{equation}
where $\varepsilon >0$ is a tuneable kernel scale and  $\|\cdot\|_{M}$ is a metric. The choice of the metric $\|\cdot\|_{M}$ depends on the application; common choices are the Euclidean and the Mahalanobis distances \cite{Coifman_Lafon:2006,Coifman_Singer:2008,TalmonPNAS,TalmonACHA,talmon2015manifold}. This construction implicitly defines a weighted graph, where the data samples $\left\{\boldsymbol{s}_i\right\}_{i=1}^{N}$ are the nodes of the graph, and $W_{i,j}$ is the weight of the edge connecting node $\boldsymbol{s}_i$ and node $\boldsymbol{s}_j$. The next step is to normalize the affinity matrix and then to build the diffusion operator $\mathbf{K} \in \mathbb{R}^{N\times N}$, e.g., by:
\begin{equation}
\label{eq:DM_affinity_def2}
Q_{i,i}=\left(\sum\limits_{l=1}^{N} W_{i,l}\right)^{-1} ; \mathbf{K} = \mathbf{Q}\mathbf{W},
\end{equation}
where $\mathbf{Q}$ is a diagonal matrix used for normalization, such that in this case $\mathbf{K}$ is row-stochastic. Hence, $\mathbf{K}$ can be viewed as the transition matrix of a Markov chain defined on the graph. Accordingly, for $t > 0$, $\mathbf{K}^t$ is the transition probability matrix of $t$ consecutive steps, and $(K^t)_{i,j}$ is the  probability to jump from node $\boldsymbol{s}_i$ to node $\boldsymbol{s}_j$ in $t$ steps.
Let $d_t(i,j)$ be the diffusion distance \cite{Coifman_Lafon:2006} between the $i$th and the $j$th data samples, i.e. $d_t$ is a function defined by
\begin{equation}
\label{eq:DM_DD}
	d_t(i,j)=\sqrt{\sum_{l=1,\ldots,N} {\frac{\left((K^t)_{i,l}-(K^t)_{j,l}\right)^2}{\phi_0(l)}}}
\end{equation}
where $\phi_0(\cdot)$ is the stationary distribution of the Markov chain.
The diffusion distance has been shown to be a powerful metric for measuring geometrical similarities between data-points \cite{Coifman_Lafon:2006}. While the Euclidean distance compares two individual data-points and might be affected by distortions and noise, the diffusion distance introduces much more noise-robust affinities since it relies on the connectivity between the two data-points using the entire data-set \cite{Coifman_Lafon:2006,Ht_ElKaroui:2016}.

However, the direct computation of the diffusion distance is cumbersome.
An efficient calculation is attainable via the spectral decomposition of $\mathbf{K}$.
Let $\left\{\lambda_l\right\}_{l=0}^{N-1}$ and $\left\{\boldsymbol{\psi}_l\right\}_{l=0}^{N-1}$ be the sets of eigenvalues and right eigenvectors of $\mathbf{K}$, where the eigenvalues are in descending order.
Define a new representation (embedding) of the data-points:
\begin{equation}\label{}
  \Psi_t(i) : \boldsymbol{s}_i \mapsto [\lambda_{1}^{t}\psi_1(i),\lambda_{2}^{t}\psi_2(i),\ldots,\lambda_{N-1}^{t}\psi_{N-1}(i)],
\end{equation}
where $\psi_l(i)$ denotes the $i$the element of $\boldsymbol{\psi}_l$.
The obtained embedding provides a new representation of the data, referred to as \gls{DM}, in which the Euclidean distance between two embedded data-point is equal to the diffusion distance \cite{Coifman_Lafon:2006}, i.e.:
\begin{equation}
d_t^2(i,j) = \|\Psi_t(i)-\Psi_t(j)\|^2 = \sum_{l\geq1} \lambda_{l}^{2t}(\psi_l(i)-\psi_l(j))^2.
\end{equation}

In order to achieve a compact representation in reduced dimensionality, \gls{DM} is often redefined by keeping only the first $L$ components (i.e., the $L$ eigenvalues and eigenvectors corresponding to the largest $L$ eigenvalues):
\begin{equation}\label{eq:dm}
  \Psi_t(i) : \boldsymbol{s}_i \mapsto [\lambda_{1}^{t}\psi_1(i),\lambda_{2}^{t}\psi_2(i),\ldots,\lambda_{L}^{t}\psi_{L}(i)],
\end{equation}
where $L$ is usually determined by the eigenvalues decay.
For more details and full analysis of this algorithm see \cite{Coifman_Lafon:2006,Ht_Singer:2016}. The entire \gls{DM} method is outlined in Algorithm \ref{alg:DM}.

The term ``diffusion distance" in \eqref{eq:DM_DD} suggests that $d_t(i,j)$ induces a reasonable notion of distance.  Recall the definition of a distance.
\begin{definition}\label{def:metric}
Let $X$ be a set.  A {\it distance} (or {\it metric}) on $X$ is a function $d : X \times X \to \mathbb{R}_+$ such that for all $x, y, z \in X$:
\begin{enumerate}
\item{
$d(x, y) = 0$ if and only if $x = y$,
}
\item{
$d(x, y) = d(y, x)$,
}
\item{
$d(x, z) \leq d(x, y) + d(y, z)$.
}
\end{enumerate}
\end{definition}

The following proposition states that the ``diffusion distance" is really a metric defined on the nodes of the graph.

\begin{proposition}
\label{pre:DD_Metric}
If  $\mathbf{K}$ is full rank, then $d_t$ is a distance function.
\end{proposition}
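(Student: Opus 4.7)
The plan is to recognize $d_t$ as a pullback of the standard Euclidean metric on $\mathbb{R}^N$ under an appropriate embedding, so that symmetry, non-negativity, and the triangle inequality come for free, and only the identity-of-indiscernibles axiom needs the full-rank hypothesis.

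First, I would introduce the map $\Phi : \{1,\dots,N\} \to \mathbb{R}^N$ defined componentwise by
\begin{equation*}
\Phi(i)_l \;=\; \frac{(K^t)_{i,l}}{\sqrt{\phi_0(l)}}, \qquad l=1,\dots,N,
\end{equation*}
which is well defined because $\phi_0$ is the stationary distribution of an irreducible row-stochastic chain (so $\phi_0(l) > 0$). By inspection of \eqref{eq:DM_DD}, one has $d_t(i,j) = \|\Phi(i) - \Phi(j)\|_2$, the ordinary Euclidean norm in $\mathbb{R}^N$.

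From this identification, properties (2) and (3) of Definition \ref{def:metric} are immediate: symmetry follows from $\|v-w\| = \|w-v\|$, and the triangle inequality $d_t(i,z) \leq d_t(i,j) + d_t(j,z)$ follows from the corresponding inequality for $\|\cdot\|_2$ applied to $\Phi(i), \Phi(j), \Phi(z)$. Non-negativity is likewise automatic, and clearly $d_t(i,i) = 0$.

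The only axiom requiring work is the converse direction of (1), and this is where the full-rank hypothesis is used. Suppose $d_t(i,j) = 0$. Then $\Phi(i) = \Phi(j)$, and since each $\phi_0(l) > 0$, this forces $(K^t)_{i,l} = (K^t)_{j,l}$ for every $l$, i.e.\ the $i$th and $j$th rows of $K^t$ coincide. If $i \neq j$, the difference of these two rows would be a nonzero vector in the left kernel of $K^t$, contradicting the fact that $K$ full rank implies $K^t$ full rank. Hence $i = j$, establishing (1) and completing the proof. No step here is genuinely hard; the only mild subtlety is justifying $\phi_0(l) > 0$ so that $\Phi$ is well defined, which follows from the standard setup of the diffusion operator $\mathbf{K} = \mathbf{Q}\mathbf{W}$ in \eqref{eq:DM_affinity_def2}.
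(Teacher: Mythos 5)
Your proof is correct and takes essentially the same route as the paper: rescaling the $l$th coordinate by $1/\sqrt{\phi_0(l)}$ (the paper's $\widetilde{\mathbf{K}}^t$ with rows $\boldsymbol{v}_i$ is exactly your map $\Phi$), reading $d_t$ as a Euclidean distance between rows, and using full rank of $\mathbf{K}^t$ to rule out identical rows. One small phrasing slip: if rows $i$ and $j$ of $\mathbf{K}^t$ coincide, their difference is the zero vector; what lies in the left kernel is $\mathbf{e}_i-\mathbf{e}_j$ (equivalently, equal rows already make $\mathbf{K}^t$ singular), so the contradiction with full rank goes through as intended.
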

Since we could not find a proof in the literature, for the sake of self-containment, we provide a proof that summarizes the discussion in \cite{LectureNotesAmit}.
\begin{proof}
We prove that (1)-(3) hold. Define $\widetilde{\mathbf{K}}^t = \mathbf{\Phi}^{-1}\mathbf{K}^t$ where $\mathbf{\Phi}$ is a diagonal matrix such that $\Phi_{k,k}=\sqrt{\phi_0(k)}$. Since $\mathbf{K}^t$ is full rank and since $\mathbf{\Phi}$ is non-degenerate by the construction of the weighted graph, $\widetilde{\mathbf{K}}^t$ is full rank. Denote the $i$th row of $\widetilde{\mathbf{K}}^t$ as $\boldsymbol{v}_i$. Accordingly, $d_t(i,j)$ can be expressed as the Euclidean distance between the $i$th and the $j$th rows of $\widetilde{\mathbf{K}}^t$:
\begin{equation}\label{eq:dd_proof}
	d_t(i,j)=\sqrt{\sum_{l=1,\ldots,N} {\left((\widetilde{\mathbf{K}}^t)_{i,l}-(\widetilde{\mathbf{K}}^t)_{j,l}\right)^2}} = \|\boldsymbol{v}_i-\boldsymbol{v}_j\|_{\mathbb{R}^N}
\end{equation}
The properties of the Euclidean distance in \eqref{eq:dd_proof} imply that (2) and (3) hold.
If $i = j$, then $d_t(i, j) = 0$. 
If $d_t(i,j) = 0$, then $\|\boldsymbol{v}_i-\boldsymbol{v}_j\|^2=0$, implying that $\boldsymbol{v}_i=\boldsymbol{v}_j$.
Since $\widetilde{\mathbf{K}}^t$ is full rank, there are no identical columns. In other words, no two different samples $\boldsymbol{v}_i$ and $\boldsymbol{v}_j$ for $i \neq j$ have identical affinities to all other samples, i.e., $\boldsymbol{v}_i \neq \boldsymbol{v}_j$.
Therefore, if  $\boldsymbol{v}_i=\boldsymbol{v}_j$, then $i=j$. 
\end{proof}

\subsection{Alternating Diffusion}
Consider a system similar to the one described in Section \ref{sec:Problem_formulation}, with only $M=2$ observable variables and $K=1$ common variable.
The \gls{AD} algorithm, outlined in Algorithm \ref{alg:AD}, builds from the observations an \gls{AD} operator that is equivalent to a simple diffusion operator (as described in Section \ref{subsec:dm}) that would have been computed if we had a direct access to samples of the common hidden variables.
This operator enables to capture only the structure of the common variables while ignoring the nuisance (sensor-specific) variables.
For more details and full analysis of this algorithm see \cite{lederman2015alternating,talmon2016latent};
here, we only bring a brief review of the method and the construction of the \gls{AD} operator.

Assume we have $N$ aligned samples (realizations) from $2$ observable variables: $\left\{ (\boldsymbol{s}^{(1)}_i,\boldsymbol{s}^{(2)}_i)\right\}_{i=1}^N$. For each observation we build an affinity matrix $\mathbf{W}^{(1)}$ and $\mathbf{W}^{(2)}$ as follows:\begin{equation}
  W_{i,j}^{(1)}=\exp\left(-\frac{\|\boldsymbol{s}^{(1)}_i-\boldsymbol{s}^{(1)}_j\|_{M_1}^2}{\varepsilon^{(1)}}\right) ; W_{i,j}^{(2)}=\exp\left(-\frac{\|\boldsymbol{s}^{(2)}_i-\boldsymbol{s}^{(2)}_j\|_{M_2}^2}{\varepsilon^{(2)}}\right)
\end{equation}
for all $i,j=1,\hdots,N$, where $\varepsilon^{(1)}$ and $\varepsilon^{(2)}$ are the tuneable kernel scales and  $\|\cdot\|_{M_1}$ and $\|\cdot\|_{M_2}$ are the chosen metrics for each set of observations.
Based on the affinity matrix, we calculate the diffusion operators $\mathbf{K}^{(1)}$ and $\mathbf{K}^{(2)}$ according to:
\[
\begin{tabular}{ccc}
    $Q_{i,i}^{(1)}=\left(\sum\limits_{l=1}^{N} W_{i,l}^{(1)}\right)^{-1}$&;& $Q_{i,i}^{(2)}=\left(\sum\limits_{l=1}^{N} W_{i,l}^{(2)}\right)^{-1}$ \\
    $\mathbf{K}^{(1)}= \mathbf{Q}^{(1)}\mathbf{W}^{(1)} $&;& $\mathbf{K}^{(2)}= \mathbf{Q}^{(2)}\mathbf{W}^{(2)} $
\end{tabular}
\]
where $\mathbf{Q}^{(1)}$ and $\mathbf{Q}^{(2)}$ are diagonal matrices used for normalization. Next, we define $\mathbf{K}^{(1)\bigcap(2)}=\mathbf{K}^{(1)}\mathbf{K}^{(2)}$ as the \gls{AD} operator. Note that $\mathbf{K}^{(1)\bigcap(2)}$ is row-stochastic, and hence, can be considered as a transition probability matrix of a new Markov chain that alternates between the two data sets. Namely, each step of this alternating process consists of a propagation step using $\mathbf{K}^{(1)}$ followed by a propagation step using $\mathbf{K}^{(2)}$.

Broadly, in each propagation step, the Markov chain jumps with high probability to neighboring samples that are similar in terms of the kernel. Combining alternating steps results in consecutive jumps according to similarities in the first set and then according to similarities in the second set. Overall, only similarities in terms of the common components among the two views are maintained.

Formally, we define the diffusion distance between the $i$th and the $j$th sample based on the \gls{AD} operator as the following Euclidean distance
\begin{equation}\label{eq:ad_diff_dist}
  d^{(1)\bigcap(2)}_{t}(i,j) = \sqrt{\sum_{l=1,\ldots,N} {\frac{\Big(\big((K^{(1)\bigcap(2)})^t\big)_{i,l}-\big((K^{(1)\bigcap(2)})^t\big)_{j,l}\Big)^2}{\phi^{(1)\bigcap(2)}_0(l)}}}
\end{equation}
where $\boldsymbol{\phi}^{(1)\bigcap(2)}_0$ is the stationary distribution of $\mathbf{K}^{(1)\bigcap(2)}$ and $t>0$ is the number of alternating steps. The following corollary is an immediate results of Proposition \ref{pre:DD_Metric}.
\begin{corollary}
\label{pre:ADD_Metric}
If $\mathbf{K}^{(1)\bigcap(2)}$ is full rank, then $d^{(1)\bigcap(2)}_{t}$ is a distance function.
\end{corollary}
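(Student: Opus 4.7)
The plan is to observe that the corollary is essentially a direct reapplication of Proposition \ref{pre:DD_Metric}, once one checks that the alternating diffusion operator fits into the same framework used in that proposition. The proposition was stated for any row-stochastic, full-rank matrix $\mathbf{K}$ with stationary distribution $\phi_0$, and the definition of $d^{(1)\bigcap(2)}_t$ in \eqref{eq:ad_diff_dist} has exactly the same form as \eqref{eq:DM_DD}, with $\mathbf{K}$ replaced by $\mathbf{K}^{(1)\bigcap(2)}$ and $\phi_0$ replaced by $\phi^{(1)\bigcap(2)}_0$. So the strategy is to verify that these substitutions are legitimate and then invoke the proposition.

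Concretely, I would proceed in three short steps. First, I would note (as already stated in the text preceding the corollary) that $\mathbf{K}^{(1)\bigcap(2)} = \mathbf{K}^{(1)}\mathbf{K}^{(2)}$ is row-stochastic because it is a product of two row-stochastic matrices, so it is the transition matrix of a Markov chain and admits a stationary distribution $\phi^{(1)\bigcap(2)}_0$, which is precisely the one appearing in the denominator of \eqref{eq:ad_diff_dist}. Second, I would assume the hypothesis that $\mathbf{K}^{(1)\bigcap(2)}$ is full rank, which by the same argument as in the previous proof implies that $(\mathbf{K}^{(1)\bigcap(2)})^t$ is full rank for any integer $t>0$. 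Third, I would match notation with the proposition and apply it verbatim, concluding that $d^{(1)\bigcap(2)}_t$ satisfies properties (1)--(3) of Definition \ref{def:metric}.

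Since the corollary is just a specialization of the proposition to the operator $\mathbf{K}^{(1)\bigcap(2)}$, there is no genuine obstacle. The only minor point worth being explicit about is that the previous proof used the fact that the diffusion distance can be rewritten as a Euclidean distance between the rows of $\mathbf{\Phi}^{-1}\mathbf{K}^t$ with $\Phi_{k,k} = \sqrt{\phi_0(k)}$; the analogous rewriting for the alternating case uses $\Phi^{(1)\bigcap(2)}_{k,k} = \sqrt{\phi^{(1)\bigcap(2)}_0(k)}$, and full rank of $(\mathbf{K}^{(1)\bigcap(2)})^t$ together with non-degeneracy of $\mathbf{\Phi}^{(1)\bigcap(2)}$ gives the identity of indiscernibles exactly as before. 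The proof therefore reduces to one or two lines citing Proposition \ref{pre:DD_Metric}.
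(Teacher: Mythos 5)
Your proposal is correct and matches the paper's treatment: the paper states the corollary as an immediate consequence of Proposition \ref{pre:DD_Metric}, applied to the row-stochastic operator $\mathbf{K}^{(1)\bigcap(2)}$ with its stationary distribution $\boldsymbol{\phi}^{(1)\bigcap(2)}_0$, exactly as you argue. Your extra remarks (row-stochasticity of the product, full rank of the $t$-th power, and the rewriting via $\mathbf{\Phi}^{-1}\mathbf{K}^t$) simply make explicit the substitutions the paper leaves implicit.
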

It can be shown that this distance is equivalent to the diffusion distance that would have been computed if we had a direct access to observable variables that see only the common variable \cite{lederman2015alternating}.

\begin{algorithm}[t]
\caption{Diffusion Maps}
\label{alg:DM}
\textbf{Input}: High-dimensional samples from an observable variables: $\left\{\boldsymbol{s}_i\right\}_{i=1}^N$.\\
\textbf{Output}: $L$ dimensional representation of the data-set $\left\{\Psi_t(i)\right\}_{i=1}^N$ where $\Psi_t(i) \in \mathbb{R}^{L}$.

\begin{enumerate}
\item Calculate the affinity matrix $\mathbf{W}$:
\begin{equation}
\label{eq:affinity_DM}
 W_{i,j}=\exp\left(-\frac{\|\boldsymbol{s}_i-\boldsymbol{s}_j\|_{M}^2}{\varepsilon}\right)
 \end{equation}
\item Compute the diffusion operator (transition matrix) $\mathbf{K}$:
\begin{equation}
Q_{i,i}=\left(\sum\limits_{l=1}^{N} W_{i,l}\right)^{-1} ; \mathbf{K} = \mathbf{Q}\mathbf{W},
\end{equation}
\item Calculate the spectral decomposition of $\mathbf{K}$ and obtain its eigenvalues $\left\{\lambda_l\right\}_{l=0}^{N-1}$ and eigenvectors $\left\{\psi_l\right\}_{l=0}^{N-1}$.
\item Define a new embedding for the data-points:
\begin{equation}\label{}
  \Psi_t(i) : \boldsymbol{s}_i \mapsto [\lambda_{1}^{t}\psi_1(i),\lambda_{2}^{t}\psi_2(i),\ldots,\lambda_{L}^{t}\psi_{L}(i)]
\end{equation}
where $t>0$ is a selected number of steps and $\psi_l(i)$ denotes the $i$th element of $\psi_l$.
\end{enumerate}
\end{algorithm}

\begin{algorithm}[th]
\caption{Alternating Diffusion}
\label{alg:AD}
\textbf{Input}: Aligned samples from $2$ observable variables: $\left\{ (\boldsymbol{s}^{(1)}_i,\boldsymbol{s}^{(2)}_i )\right\}_{i=1}^N$.\\
\textbf{Output}: Diffusion distances $d^{(1)\bigcap(2)}_{t}$.

\begin{enumerate}

\item Calculate two pairwise affinity matrices $\mathbf{W}^{(1)}$ and $\mathbf{W}^{(2)}$ based on a gaussian kernel as follows:
\begin{equation}\label{eq:affinity_def}
  W_{i,j}^{(1)}=\exp\left(-\frac{\|\boldsymbol{s}^{(1)}_i-\boldsymbol{s}^{(1)}_j\|_{M}^2}{\varepsilon^{(1)}}\right) ; W_{i,j}^{(2)}=\exp\left(-\frac{\|\boldsymbol{s}^{(2)}_i-\boldsymbol{s}^{(2)}_j\|_{M}^2}{\varepsilon^{(2)}}\right)
\end{equation}
for all $i,j=1,\hdots,N$, where $\varepsilon^{(1)}$ and $\varepsilon^{(2)}$ are the kernel scales and  $\|\cdot\|_{M}^2$ is the chosen metric.

\item Create two diffusion operators $\mathbf{K}^{(1)}$ and $\mathbf{K}^{(2)}$:
\[
\begin{tabular}{ccc}\label{eq:kernel_def}
    $Q_{i,i}^{(1)}=\left(\sum\limits_{l=1}^{N} W_{i,l}^{(1)}\right)^{-1}$&;& $Q_{i,i}^{(2)}=\left(\sum\limits_{l=1}^{N} W_{i,l}^{(2)}\right)^{-1}$ \\
    $\mathbf{K}^{(1)}= \mathbf{Q}^{(1)}\mathbf{W}^{(1)} $&;& $\mathbf{K}^{(2)}= \mathbf{Q}^{(2)}\mathbf{W}^{(2)}$
\end{tabular}
\]

\item Build the alternating-diffusion kernel:
\begin{equation}\label{eq:commom_ker_def}
  \mathbf{K}^{(1)\bigcap(2)}=\mathbf{K}^{(1)}\mathbf{K}^{(2)}
\end{equation}

\item Compute the altenating-diffusion distance between each two points $(i,j)$
\begin{equation}\label{eq:common_dist}
  d^{(1)\bigcap(2)}_{t}(i,j) = \sqrt{\sum_{l=1,\ldots,N} {\frac{\Big(\big((K^{(1)\bigcap(2)})^t\big)_{i,l}-\big((K^{(1)\bigcap(2)})^t\big)_{j,l}\Big)^2}{\phi^{(1)\bigcap(2)}_0(l)}}}
\end{equation}
where $\boldsymbol{\phi}^{(1)\bigcap(2)}_0$ is the stationary distribution of $\mathbf{K}^{(1)\bigcap(2)}$ and $t >0 $ is a tuneable parameter.
\end{enumerate}
\end{algorithm}

\subsection{Common Graph}
\label{subsec:common_graph}

\gls{AD} provides us with an access to the common variables between a pair of observable variables. By using \gls{AD} as a building block, we propose a generalization for a set of multiple observable variables.
Consider the system described in Section \ref{sec:Problem_formulation} with aligned samples from $M$ observable variables: $\left\{ (\boldsymbol{s}^{(1)}_i,\boldsymbol{s}^{(2)}_i,\ldots,\boldsymbol{s}^{(M)}_i ) \right\}_{i=1}^N$. The observable variables are driven by a set of $K$ hidden random variables $\Theta = (\boldsymbol{\theta}^{(1)},\boldsymbol{\theta}^{(2)},\ldots,\boldsymbol{\theta}^{(K)})$ and contaminated by a set of $M$ nuisance sensor-specific variables $( \boldsymbol{n}^{(1)}, \ldots, \boldsymbol{n}^{(M)} )$.
Our goal is to obtain a parametrization of the common hidden random variables $\Theta$ from the observations.

Corollary \ref{pre:ADD_Metric} provides the analytic foundation and justification to the method presented in this paper.
More specifically, in the context of our problem, consider a pair of observable variables $\boldsymbol{s}^{(m)}$ and $\boldsymbol{s}^{(n)}$.
Applying \gls{AD} to $\boldsymbol{s}^{(m)}$ and $\boldsymbol{s}^{(n)}$ yields the common hidden variables measured by the two. Therefore, its operation can be written as
\begin{equation} \label{eq:AD_intersection}
    \mathcal{S}^{(m)}\bigcap\mathcal{S}^{(n)}=\Theta^{(m)}\bigcap\Theta^{(n)}
\end{equation}
In other words, \gls{AD} captures only a subset of the common hidden variables $\Theta^{(m)}\bigcap\Theta^{(n)}$, and in addition, filters out the nuisance variables $\boldsymbol{n}^{(m)}$ and $\boldsymbol{n}^{(n)}$, which are specific to each observation.

The main idea in our method is based on the fact that the desired set of variables $\Theta$ can be derived from the union of the pairwise intersections between all pairs, meaning that:
\begin{equation}\label{eq:Theta}
     \Theta=\bigcup_{m \neq n}\left(\Theta^{(m)}\bigcap\Theta^{(n)}\right).
\end{equation}
A direct implementation of the scheme in \eqref{eq:Theta} is not feasible, since the pairwise intersections of $\Theta^{(m)}$ and $\Theta^{(n)}$ are not accessible to us.
However, note that by substituting \eqref{eq:AD_intersection} in \eqref{eq:Theta}, we get
\begin{equation}\label{eq:ThetaObs}
\Theta=\bigcup_{m \neq n}\left(\mathcal{S}^{(m)}\bigcap\mathcal{S}^{(n)}\right)
\end{equation}
meaning that $\Theta$ can be expressed using the accessible observations sets $\mathcal{S}^{(m)}$ through the
union of the intersections of all possible pairs.
Thus, this scheme for recovering $\Theta$ can be implemented by multiple applications of \gls{AD} to all possible pairs of observable variables.

The union is implemented through the formulation of a new kernel in which the affinity between each pair of samples is given by the sum of the diffusion distances over all pairs of observations.
Therefore for each kernel resulting from an application of \gls{AD} to a single pair of observations, we compute the following diffusion distance $d^{(m)\bigcap(n)}_t$, similarly to \eqref{eq:ad_diff_dist}
\begin{equation}\label{eq:ad_diff_dist2}
     d^{(m)\bigcap(n)}_t(i,j) =\sqrt{\sum_{l=1}^{N} \frac{\Big( \big((K^{(m)\bigcap(n)})^t\big)_{i,l}-\big((K^{(m)\bigcap(n)})^t\big)_{j,l} \Big) ^2}{\phi^{(m)\bigcap(n)}_0(l)}},
\end{equation}
where $\boldsymbol{\phi}^{(m)\bigcap(n)}_0$ is the stationary distribution of $\mathbf{K}^{(m)\bigcap(n)}$ and $t > 0$ is a tuneable parameter indicating the number of \gls{AD} steps.
We then define the \textit{common diffusion distance} $d^{(\cup)}_t$ as a summation over the alternating diffusion distances \eqref{eq:ad_diff_dist2} resulting from applications to all possible pairs of observations, according to
\begin{equation}
	\label{eq:ProposedUnionMetric}
    d^{(\cup)}_t(i,j) =\sum_{1\leq m,n\leq M, m \neq n} d^{(m)\bigcap(n)}_t(i,j)
\end{equation}
where $i,j=1,\ldots,N$.
We now show that $d^{(\cup)}_t$ is a metric.
\begin{proposition}
\label{prop:metrics_sum}
Let $X$ be a set and consider two distance functions $d_1,d_2:  X \times X \to \mathbb{R}_+$.
Define $d(x,y)=d_1(x,y)+d_2(x,y)$ for all $x, y \in X$. Then $d$ is a distance function as well.
In particular, if $\mathbf{K}^{(m)\bigcap(n)}$ are full rank for all $m,n=1,\ldots,M, m \neq n$, then $d^{(\cup)}_t$ is a distance function.
\end{proposition}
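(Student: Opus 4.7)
The plan is to verify the three metric axioms of Definition \ref{def:metric} directly for $d = d_1 + d_2$, using that $d_1$ and $d_2$ each satisfy them. First I would handle symmetry, which is immediate: $d(x,y) = d_1(x,y) + d_2(x,y) = d_1(y,x) + d_2(y,x) = d(y,x)$. Next I would handle the triangle inequality, which follows by adding the two triangle inequalities for $d_1$ and $d_2$: for any $x,y,z \in X$,
\begin{equation*}
d(x,z) = d_1(x,z) + d_2(x,z) \leq \bigl(d_1(x,y) + d_1(y,z)\bigr) + \bigl(d_2(x,y) + d_2(y,z)\bigr) = d(x,y) + d(y,z).
\end{equation*}

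The slightly less mechanical step is the identity-of-indiscernibles axiom. If $x = y$, both $d_1(x,y)$ and $d_2(x,y)$ vanish, so $d(x,y)=0$. Conversely, if $d(x,y) = 0$, I would use the non-negativity of $d_1$ and $d_2$ (built into the codomain $\mathbb{R}_+$): a sum of two non-negative numbers equals zero only if each summand is zero, so $d_1(x,y) = d_2(x,y) = 0$, and then axiom (1) for either $d_1$ or $d_2$ forces $x = y$. This is the only point where one has to be a little careful, but there is no real obstacle.

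For the ``in particular'' claim about $d^{(\cup)}_t$, I would invoke Corollary \ref{pre:ADD_Metric}: under the full-rank hypothesis, each summand $d^{(m)\bigcap(n)}_t$ in \eqref{eq:ProposedUnionMetric} is itself a distance. Then I would extend the two-summand result to the finite sum over all ordered pairs $(m,n)$ with $m \neq n$ by a trivial induction on the number of summands (the induction hypothesis gives that the partial sum is a metric, and adding one more metric preserves the property by the first part). This yields that $d^{(\cup)}_t$ is a distance function on the index set $\{1,\ldots,N\}$, completing the proof. I do not anticipate any substantive obstacle; the proposition is essentially a bookkeeping statement about closure of the class of metrics under pointwise addition, combined with a direct appeal to the earlier corollary.
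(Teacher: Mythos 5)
Your proof is correct and follows essentially the same route as the paper: verify the three axioms of Definition \ref{def:metric} for the sum of two metrics, then invoke Corollary \ref{pre:ADD_Metric} and extend to the finite sum of the pairwise alternating-diffusion distances (your explicit induction is just a slightly more careful phrasing of the paper's ``straight-forward generalization''). No gaps.
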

\begin{proof}
By definition $d$ is $d : X \times X \to \mathbb{R}_+$.
We prove that properties (1)--(3) in Definition \ref{def:metric} hold.
Using the symmetry property of $d_1$ and $d_2$ we have that $d(x,y)=d(y,x)$.
Consider $x,z \in X$, using property (3) of $d_1$ and $d_2$, for any $y \in X$
 $d(x,z)=d_1(x,z)+d_2(x,z) \leq d_1(x,y)+d_1(y,z)+d_2(x,y)+d_2(y,z) = d(x,y) + d(y,z)$.
If $x=y$ then $d(x,y)=0$. If $d(x,y)=0$, using the non-negativity property (1) of $d_1$ and $d_2$ we have that $d_1(x,y)=0$ and $d_2(x,y)=0$. From property (1) we obtain that $x=y$.

Now, by Corollary \ref{pre:ADD_Metric}, if $\mathbf{K}^{(m)\bigcap(n)}$ is full rank, then, $d^{(m)\bigcap(n)}_t$ is a distance function, and therefore, by a straight-forward generalization, it follows that $d^{(\cup)}_t$ is a distance function.
\end{proof}

Based on the common diffusion distance $d^{(\cup)}_t$, then we calculate an affinity matrix
\begin{equation}
  W^{(\cup)}_{i,j}=\exp\left(-\frac{d^{(\cup)}_t(i,j)}{\varepsilon^{(\cup)}}\right),
\end{equation}
where $\varepsilon^{(\cup)} >0$ is the chosen kernel scale. Next we normalize the affinity matrix and build the common diffusion operator $\mathbf{K^{(\cup)}} \in \mathbb{R}^{N\times N}$
\begin{equation}
    Q^{(\cup)}_{i,i}=\left(\sum\limits_{l=1}^{N} W^{(\cup)}_{i,l}\right)^{-1}  ;  \mathbf{K^{(\cup)}}= \mathbf{Q}^{(\cup)}\mathbf{W}^{(\cup)}
\end{equation}
In conclusion, the new graph with kernel $\mathbf{K^{(\cup)}}$ consists of two main components. First, the intersections between any pair of observations $\mathcal{S}^{(m)}\bigcap\mathcal{S}^{(n)}$ are implemented using \gls{AD} that provides the extraction of the common hidden variables $\Theta^{(m)}\bigcap\Theta^{(n)}$. Second, the union $\bigcup_{m,n}\left(\Theta^{(m)}\bigcap\Theta^{(n)}\right)$ is implemented via the summation of the resulting diffusion distances from the \gls{AD} applications.
By construction, in the kernel $\mathbf{K^{(\cup)}}$, the connectivity between the $i$th and the $j$th data samples is proportional to the intrinsic distance $\|\Theta_i-\Theta_j\|$. This means that the common global diffusion kernel $\mathbf{K^{(\cup)}}$ can be used for obtaining a low-dimensional representation of $\Theta$.
The common graph algorithm described in this section is summarized in Algorithm \ref{alg:CG}.

Four final remarks follow. First, it should be noted that there is a theoretical gap between the desired union described in \eqref{eq:ThetaObs} and its implementation via the summation of the common diffusion distances in the proposed metric $d^{(\cup)}_t$ which is described in \eqref{eq:ProposedUnionMetric}. The motivation for this choice for implementation is that the embeddings achieved using the proposed metric $d^{(\cup)}_t$ corresponds to those that would have been achieved using a union scheme. Although that this claim is supported by empirical results in Section \ref{sec:Simulation Results}, the derivation of a union scheme still calls for rigorous analysis in future work.

Second, the proposed implementation of the union via diffusion distance summation enhances the common variables that appear multiple times in the various intersections.
By doing so, we slightly abuse the definition of the union, where duplicates are all ``put together''. In other words, in the strict definition of a union, in contrast to our implementation, common hidden variables related to two or more intersection results should be taken into account only once.
Depending on the application at hand, this may be a desired property, and the derivation of a scheme in which each common components has a uniform gain is postponed to future work.

Third, the proposed algorithm can be viewed from a nonlinear filtering standpoint. By applying the proposed algorithm, we maintain or even enhance the common hidden variables, while filtering out the nuisance variables that are sensor/observation-specific.

While that the previous remarks addressed the proposed implementations, and the potential theoretical gaps that should be addressed in the future, the last remark deals with the practice of computing the approximation of the proposed implementation $d^{(\cup)}_t$. For the application of sleep stage identification described in Section \ref{sec:Sleep Results}, we have empirically found that a modified computation of $d^{(\cup)}_t$ gives rise to improve performance. This modification results in a ``smoother'' embedding, better representing the sleep stage.
In the alternative implementation, rather than calculating $d^{(\cup)}_t$ as in \eqref{eq:D_unify}, we calculate it in the following way.
First, for each pair of sensors we apply the standard \gls{DM} based on the pairwise kernels $\mathbf{K}^{(m)\bigcap(n)}$, $1\leq m,n \leq M, m \neq n$ computed in \eqref{eq:commom_ker_def}. For each pair we obtain a $L_{(m)\bigcap(n)}$-dimensional representation, where $L_{(m)\bigcap(n)}$ is a chosen parameter for the pair $(m,n)$, estimated using the ``spectral gap'' of the decay of the eigenvalues of $\mathbf{K}^{(m)\bigcap(n)}$.
Second, we concatenate the low-dimensional representations obtained from the previous step into a single vector. In other words, we now have $N$ concatenated $L$-dimensional vectors, where $L=\sum_{1\leq m,n \leq M, m \neq n}L_{(m)\bigcap(n)}$, representing the $N$ observations taken simultaneously from all $M$ sensors.
Third, we calculate the pairwise distance $d^{(\cup)}_t$ between the $N$ new concatenated vectors. Broadly, this technique is similar to \cite{keller_audio_visual_2010}, only here we combine the already ``filtered'' components (the results of \gls{AD} rather than \gls{DM}).
Since these vectors consist of components from different sensors, we chose to use a modified version of the Mahalanhobis distance. 
This modified Mahalanobis distance was first introduced in \cite{Coifman_Singer:2008}, and since then, was shown to exhibit remarkable capability to standardize measurements from different sources, e.g. in \cite{wu2015assess,TalmonPNAS,TalmonACHA,talmon2015manifold}. In \cite{TalmonPNAS,TalmonACHA}, it was shown to build intrinsic representations by revealing a hidden process driving the measurements. Recently, this technique was applied to multimodal data in \cite{salhov2016multi}. Importantly, compared with \gls{AD} and our proposed method, these methods \cite{TalmonPNAS,TalmonACHA,salhov2016multi} combine the information embodied in all the measurements and do not attempt to suppress nuisance variables or to extract only the common components.

The numerical implementation of the Mahalanobis distance deserves a remark.
The computation of the Mahalanobis distance requires estimation of the local covariance matrices of the vectors, each of size $L \times L$. This computation might be computationally cumbersome when $L$ is large, as often in our case.
In order to relax the required computational load, prior to the computation of the Mahalanobis distance, one can project the concatenated samples onto a lower dimensional vector space, for example, using \glspl{RP} \cite{candes2006near}, and then, compute the Mahalanobis distance for the projected samples with reduced dimensionality.
This heuristic method for calculating the common diffusion $d^{(\cup)}_t$ is summarized in Algorithm \ref{alg:SmoothUnifyScheme}.
\begin{algorithm}[t]
\caption{Mahalanobis-based Union Scheme}
\label{alg:SmoothUnifyScheme}
\textbf{Input}: $M(M-1)$ alternating-diffusion operators $\mathbf{K}^{(m)\bigcap(n)}$, $1\leq m,n\leq M, m \neq n$\\
\textbf{Output}: Alternating-diffusion distance $d^{(\cup)}_t$
\begin{enumerate}
\item  For $1\leq m,n\leq M, m \neq n$, calculate the spectral decomposition of each kernel $\mathbf{K}^{(m)\bigcap(n)}$, and obtain its eigenvalues $\left\{\lambda^{(m)\bigcap(n)}_l\right\}_{l=0}^{N-1}$ and eigenvectors $\left\{\boldsymbol{\psi}^{(m)\bigcap(n)}_l\right\}_{l=0}^{N-1}$.

\item For $1\leq m,n\leq M, m \neq n$, build an $L_{(m)\bigcap(n)}$-dimensional representation using standard \gls{DM} \eqref{eq:dm} for each time sample $i=1 \ldots N$.
    \begin{equation}
         \Psi^{(m)\bigcap(n)}_t(i) =  [\lambda_{1}^{t}\psi^{(m)\bigcap(n)}_1(i),\lambda_{2}^{t}\psi^{(m)\bigcap(n)}_2(i),\ldots,\lambda_{L_{(m)\bigcap(n)}}^{t}\psi^{(m)\bigcap(n)}_{L_{(m)\bigcap(n)}}(i)]
    \end{equation}
    where $t>0$ is a tuneable parameter.

\item For each time sample $i=1 \ldots N$, concatenate the low-dimensional representations into a single vector
    \begin{equation}
    \begin{split}
    \Psi^{(\cup)}_t(i) = \biggl(\Psi^{(1)\bigcap(2)}_t(i), \Psi^{(1)\bigcap(3)}_t(i),&\ldots,\Psi^{(1)\bigcap(M)}_t(i),\\
    \Psi^{(2)\bigcap(1)}_t(i),\Psi^{(2)\bigcap(3)}_t(i),&\ldots,\Psi^{(2)\bigcap(M)}_t(i),\\
    &\ldots\\
    \Psi^{(M)\bigcap(1)}_t(i),\Psi^{(M)\bigcap(2)}_t(i),&\ldots,\Psi^{(M)\bigcap(M-1)}_t(i) \biggl)
    \end{split}
    \end{equation}

\item Calculate $d_t^{(\cup)}$ using the Mahalanobis distance:
    \begin{equation}
    d^{(\cup)}_t(i,j)=\|\Psi^{(\cup)}_t(i)-\Psi^{(\cup)}_t(j)\|_\mathrm{Mahalanobis}
    \end{equation}
    for $i,j=1,\ldots,N$.
\end{enumerate}
\end{algorithm}

\begin{algorithm}[t]
\caption{Common Graph}
\label{alg:CG}

\textbf{Input}: Aligned samples from $M$ sets of observations: $\left\{(\boldsymbol{s}^{(1)}_i,\boldsymbol{s}^{(2)}_i,\ldots,\boldsymbol{s}^{(M)}_i)\right\}_{i=1}^N$.

\textbf{Output}: Low-dimensional representation of the common hidden random variables $\Theta$.

\begin{enumerate}
\item For each pair of observation sets $1\leq m,n\leq M, m \neq n$, apply alternating diffusion (Algorithm \ref{alg:AD}), and obtain the diffusion distance $d^{(m)\bigcap(n)}_{t}$.

\item Compute the distance $d^{(\cup)}_t$
\begin{equation}
    \label{eq:D_unify}
    d^{(\cup)}_t(i,j) =\sum_{1\leq m,n\leq M, m \neq n} d^{(m)\bigcap(n)}_t(i,j)
\end{equation}
for $i,j=1,\ldots,N$.

\item Based on the common diffusion distance $d^{(\cup)}_t$ calculate an affinity matrix
\begin{equation}
  W^{(\cup)}_{i,j}=\exp\left(-\frac{(d^{(\cup)}_t(i,j))^2}{\varepsilon^{(\cup)}}\right)
\end{equation}

\item Construct the diffusion operator $\mathbf{K^{(\cup)}}$:
\begin{equation}
    Q^{(\cup)}_{i,i}=\left(\sum\limits_{l=1}^{N} W^{(\cup)}_{i,l}\right)^{-1}  ;  \mathbf{K^{(\cup)}}= \mathbf{Q^{(\cup)}}\mathbf{W^{(\cup)}}
\end{equation}


\item  Apply standard diffusion maps (steps 3 and 4 in Algorithm \ref{alg:DM}) using $\mathbf{K^{(\cup)}}$, and obtain an $L$-dimensional representation of $\Theta$.
\end{enumerate}
\end{algorithm}

\section{Simulation Results}
\label{sec:Simulation Results}
Consider the toy problem described in Section \ref{ssec:toy_problem}. We simulate $6$ hidden scalar variables: $3$ common variables $\left( \theta^{(1)},\theta^{(2)},\theta^{(3)} \right)$ and $3$ nuisance variables $\left( n^{(1)},n^{(2)},n^{(3)} \right)$.
The variables are statistically independent and uniformly distributed in $[0,2\pi]$.
We then build $3$ sets of $N$ RGB images: $\{\boldsymbol{r}^{(1)}_i\},\{\boldsymbol{r}^{(2)}_i\}, \{\boldsymbol{r}^{(3)}_i\}, \ i=1,\ldots,N$.
The sensitivity table of this example is given by
\begin{equation}\label{toy_sensitivity}
 \mathbf{S}^T = \left(
	\begin{array}{ccc}
	1 & 1 & 0 \\
	0 & 1 & 1 \\
	1 & 0 & 1 \\
	\end{array}
	\right).
\end{equation}

Each image contains $3$ arrows, where each arrow is rotated according to a randomly generated angle: the angles of the arrows in $\boldsymbol{r}^{(1)}_i$ are $\left(\theta^{(1)}_i,\theta^{(2)}_i,n^{(1)}_i \right)$, the angles in $\boldsymbol{r}^{(2)}_i$ are $\left( \theta^{(2)}_i,\theta^{(3)}_i,n^{(2)}_i \right)$, and the angles in $\boldsymbol{r}^{(3)}_i$ are $\left(\theta^{(3)}_i,\theta^{(1)}_i,n^{(3)}_i \right)$.
The dimensionality of each RGB image is $36\times96\times3$. We column-stack the RGB images, i.e., $\boldsymbol{r}^{(1)}_i,\boldsymbol{r}^{(2)}_i,\boldsymbol{r}^{(3)}_i $ are vectors of length $J=10368$.
The proposed algorithm is data-driven, and therefore, it does not assume any prior knowledge on the nature of observations.
In order to highlight this important property, we use \glspl{RP}.  
First, \glspl{RP} with sufficiently large dimension maintain the underlying geometry, yet the image appearances are lost,
which shows that our algorithm does not apply any image processing. Second, in the original images, the different hidden variables are manifested in separate coordinates/pixels; \glspl{RP} mix the hidden variables, enabling a more challenging extraction task.
We generate $D=1600$ orthonormal vectors $\{ \boldsymbol{b}_i\}_{i=1}^{D}$ of length $J$ and denote by $\mathbf{B} \in \mathbb{R}^{J \times D}$ the matrix whose columns are these random vectors.
We build the data of the sensors (cameras) by \glspl{RP} ${\boldsymbol{s}}^{(m)}_i= \mathbf{B}^T \boldsymbol{r}^{(m)}_i$, where $m$ is the camera index. In the case of data acquired by cameras, $\mathbf{B}$ can be viewed as the coding system in the cameras.
An illustration of the images and their \glspl{RP} is depicted in Figure~\ref{fig:RP_process}.
Illustration of the ``movies'' of the \glspl{RP} captured by each camera can be seen in the following link \href{https://youtu.be/91N6mhlYQYY}{https://youtu.be/91N6mhlYQYY}.

\begin{figure}[t]
\centering
\includegraphics[scale=0.35]{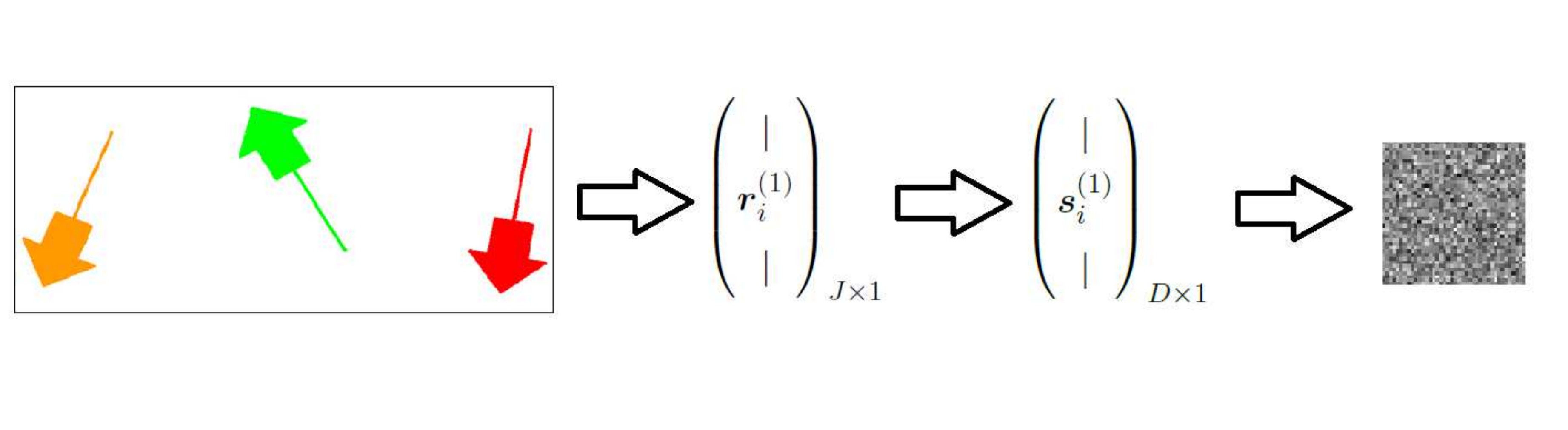}
\caption{Random projection diagram of the $i$th image. Each RGB image was column stacked into a vector of length $J=10368$. Then it was projected on a subspace of $\mathds{R}^{D}$ using an orthonormal set $\{v_i\}_{i=1}^{D}$. The projection is illustrated by a gray-scale $40\times40$ image. As can be seen the image's property are lost through this projection.}
\label{fig:RP_process}
\end{figure}

We first apply \gls{DM} separately to each set of observations. Figure~\ref{fig:DM} presents $2$-dimensional views of the obtained $3$-dimensional embeddings.
Each subfigure presents a scatter plot of embedded data-points. Each data-point is an image (a frame in the movie) captured by a certain camera after a random-projection ${\boldsymbol{s}}^{(m)}_i$, where $i$ is the frame index and $m$ is the camera index. The axes of the scatter plot are the first $3$ components of the obtained embedding derived from the corresponding camera. The embedded data-points are colored according to the rotating angles $\left( \theta^{(1)},\theta^{(2)},\theta^{(3)} \right)$ and $3$ noise variables $\left( n^{(1)},n^{(2)},n^{(3)} \right)$. It should be noted that this information (the color) was added after calculating the embedding and was not taken into account in the computation of embedding.
The subfigure in the $l$th column and in the $m$th row contains the embedded data-points derived from the $m$th camera $\{\boldsymbol{s}^{(m)}_i\}_{i=1}^{N}$, and its data-points are colored according to the rotating angle of the $l$th arrow.
In the $3$ left columns the color coding is according to  $\{\theta^{(1)}_i\}_{i=1}^{N}$, $\{\theta^{(2)}_i\}_{i=1}^{N}$, $\left\{\theta^{(3)}_i\right\}_{i=1}^{N}$, and in the $3$ right columns the color coding is according to $\{n^{(1)}_i\}_{i=1}^{N}$, $\{n^{(2)}_i\}_{i=1}^{N}$, $\{n^{(3)}_i\}_{i=1}^{N}$. In other words, in each row the same scatter plot is shown, but with different color coding. The $3$-dimensional scatter plots are rotated so that the obtained color gradient is best visualized from our $2$-dimensional view point.
For example, the subfigures in the second row are derived from the observations from the second camera $\{\boldsymbol{s}^{(2)}_i\}_{i=1}^{N}$ . The data-points in the first column are colored according to the rotation angles $\{\theta^{(1)}_i\}_{i=1}^{N}$, in the second column according to $\{\theta^{(2)}_i\}_{i=1}^{N}$, etc.

As can be seen, in each row, $3$ scatter plots exhibit a smooth color gradient, $2$ from the left $3$ columns and $1$ from the right $3$ columns, corresponding to the variables sensed by the respective camera. In the $3$ left columns, we see that the color gradients indicates accurate detection of the common variables according to the sensing matrix $\mathbf{S}$. On the $3$ right columns, only in the diagonal subfigures exhibit a smooth color gradient, indicating that each captures only its own nuisance variable, as expected.
In conclusion, Figure~\ref{fig:DM} implies that the obtained embeddings by \gls{DM} provide accurate parametrizations of the hidden variables measured by each observation (camera), both the common and the nuisance variables.

\begin{figure}[t]
\centering
\begin{tabular}{cccc:ccc}
&  $\theta^{(1)}$ &   $\theta^{(2)}$ &   $\theta^{(3)}$ &   $n^{(1)}$ &   $n^{(2)}$ &   $n^{(3)}$\\
$\boldsymbol{s}^{(1)}$ &
\includegraphics[scale=0.1]{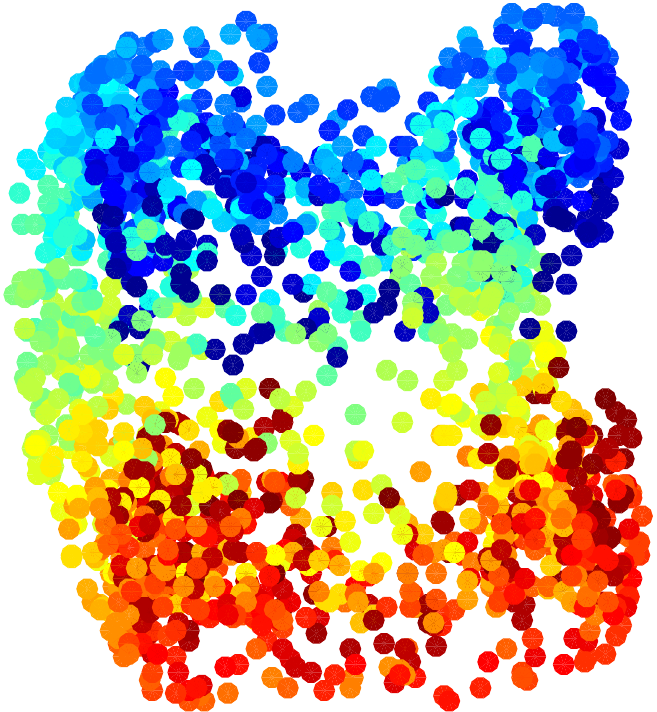} &
\includegraphics[scale=0.1]{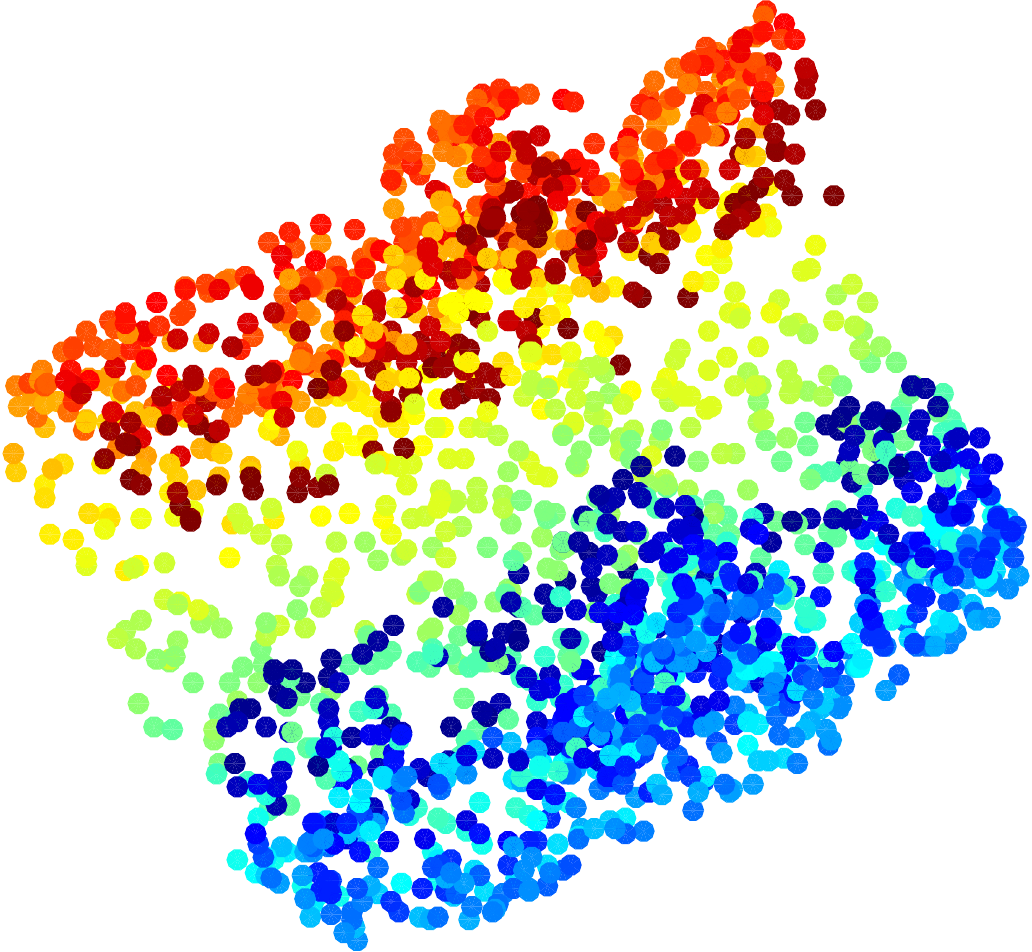} & \includegraphics[scale=0.1]{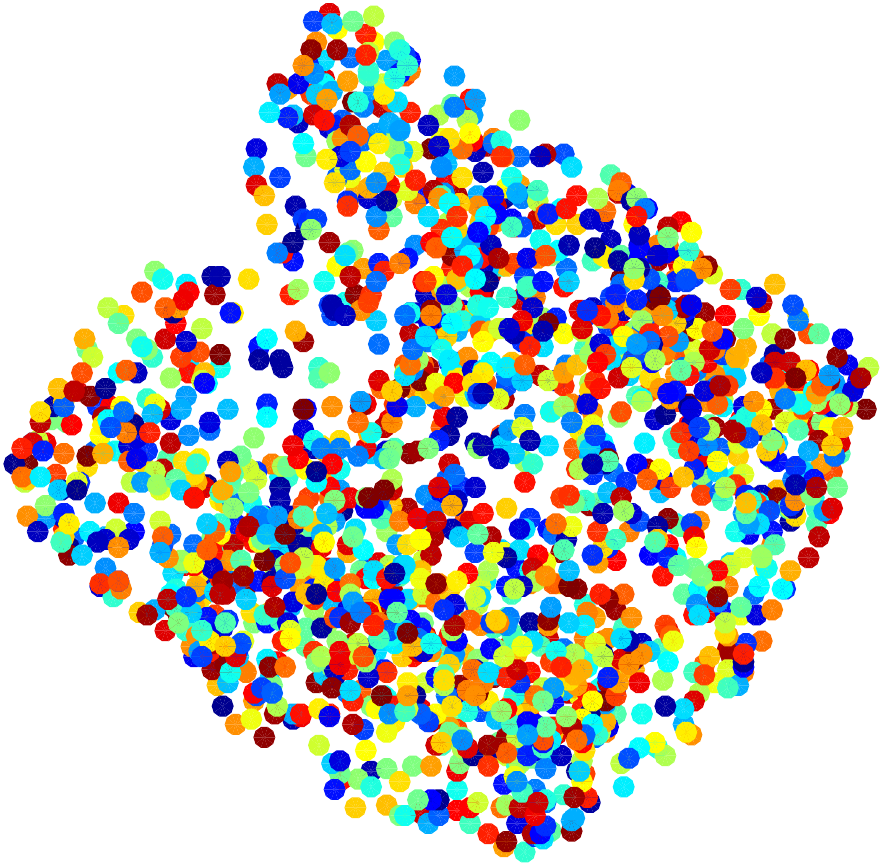} &\includegraphics[scale=0.1]{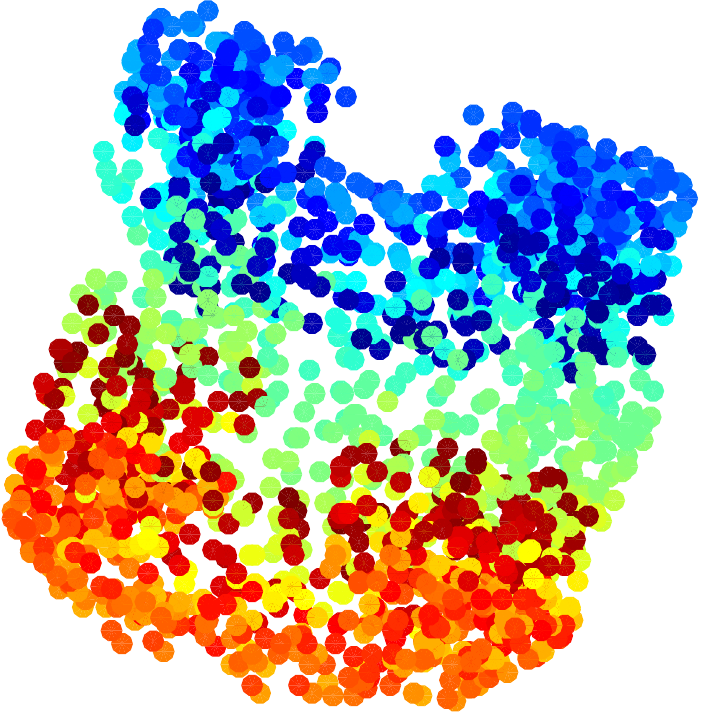} &
\includegraphics[scale=0.1]{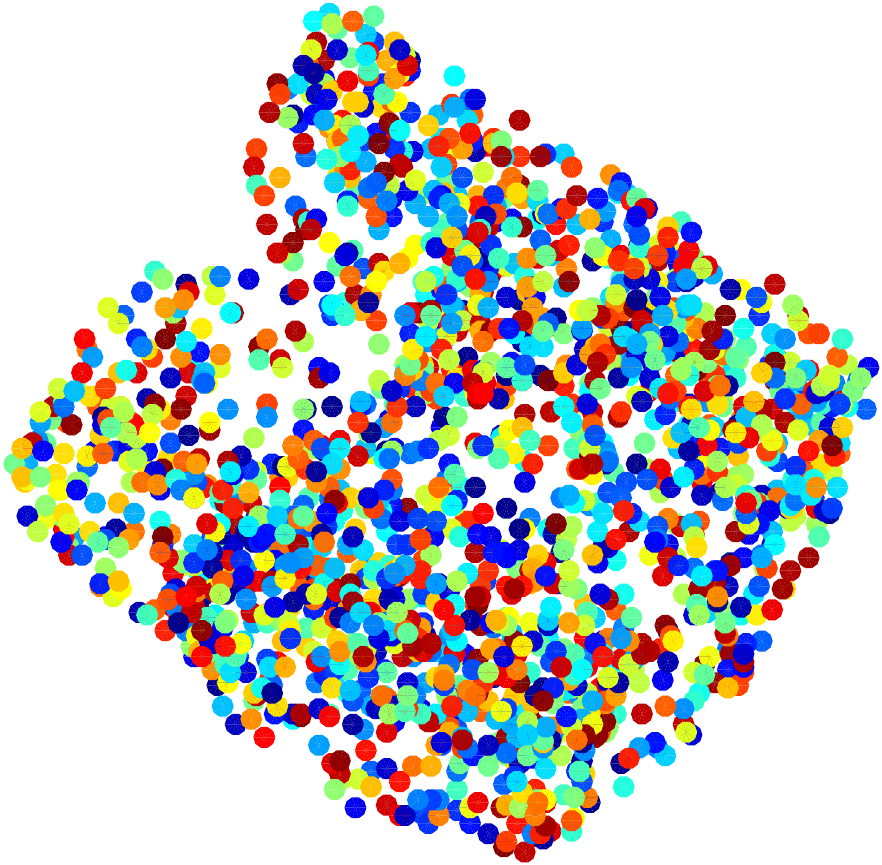} & \includegraphics[scale=0.1]{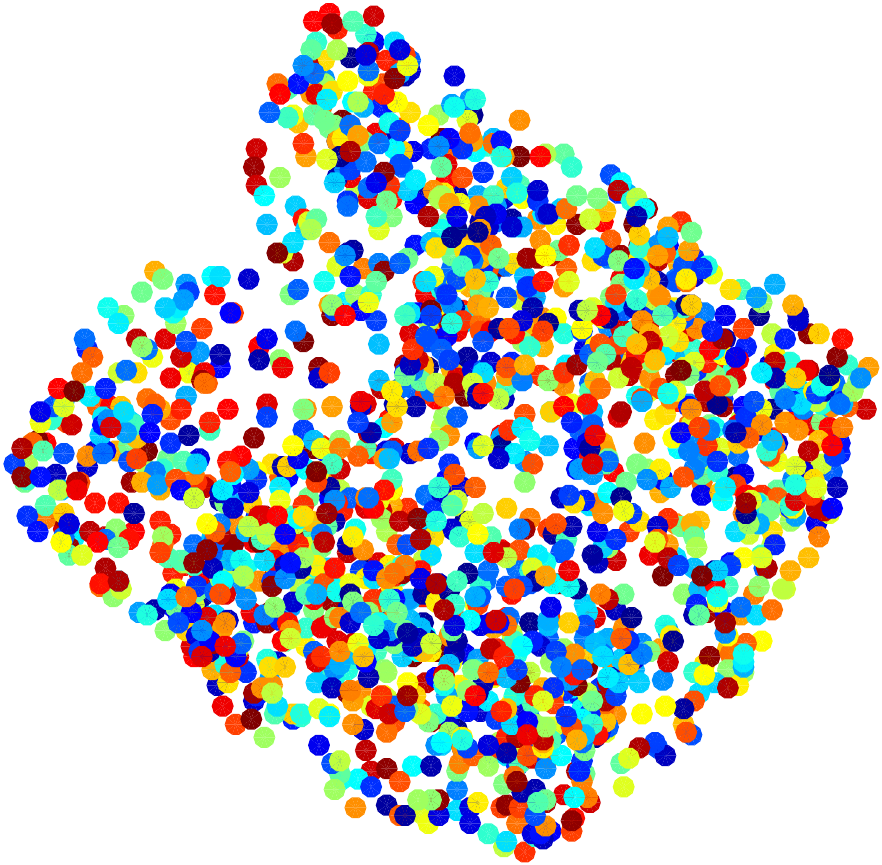}\\
$\boldsymbol{s}^{(2)}$ &
\includegraphics[scale=0.1]{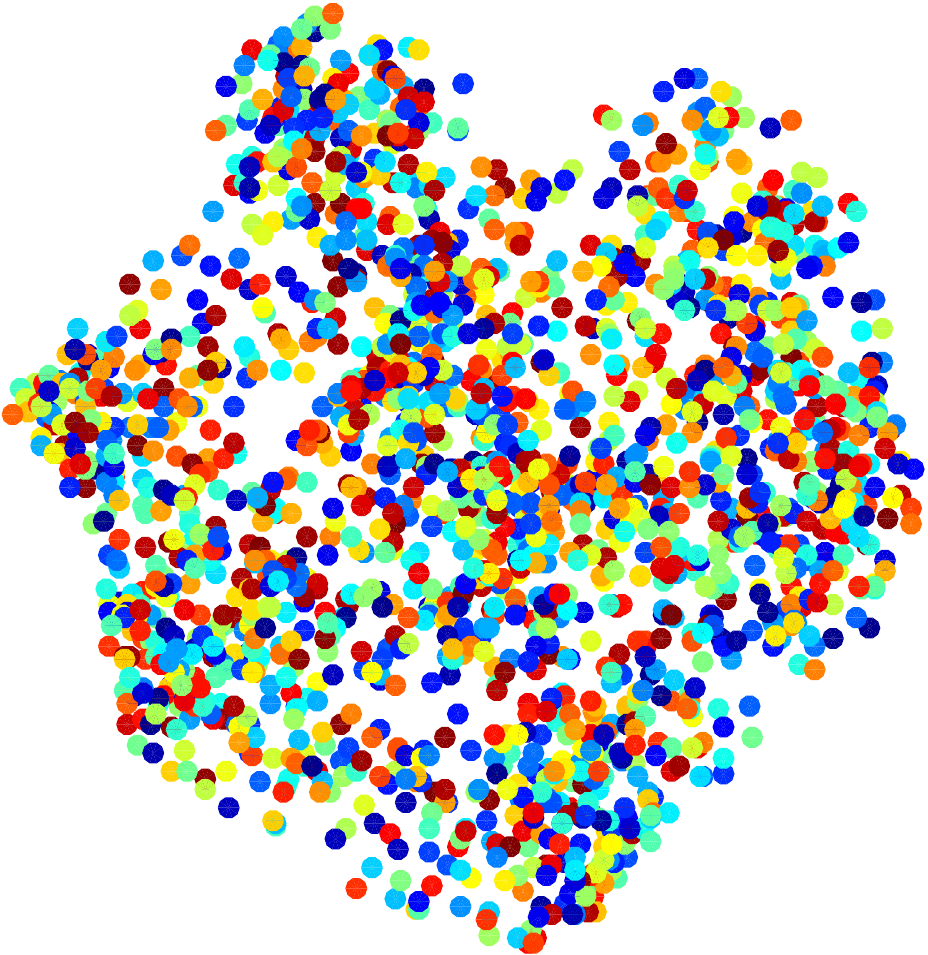} &
\includegraphics[scale=0.1]{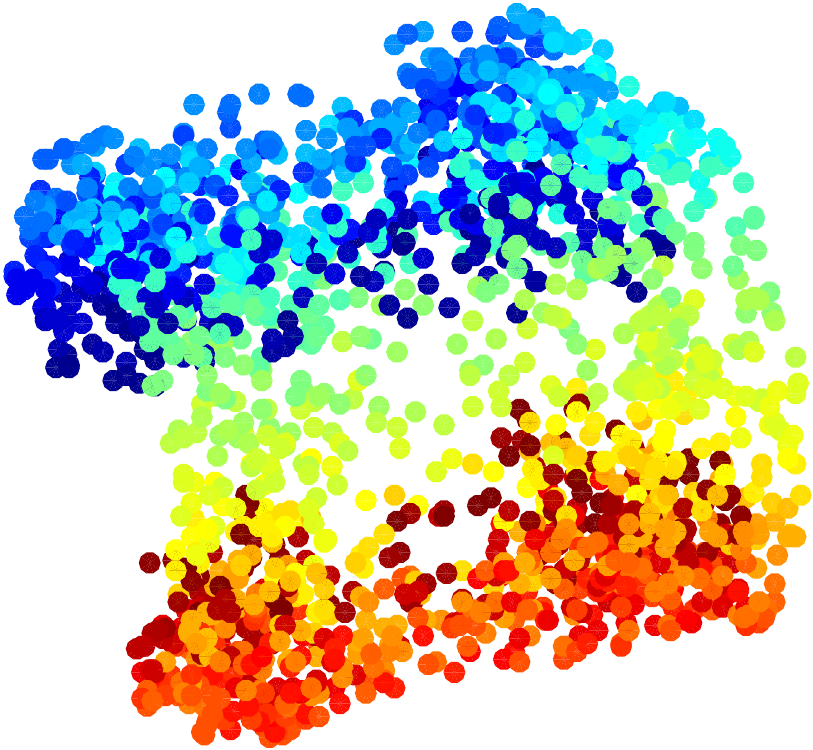} & \includegraphics[scale=0.1]{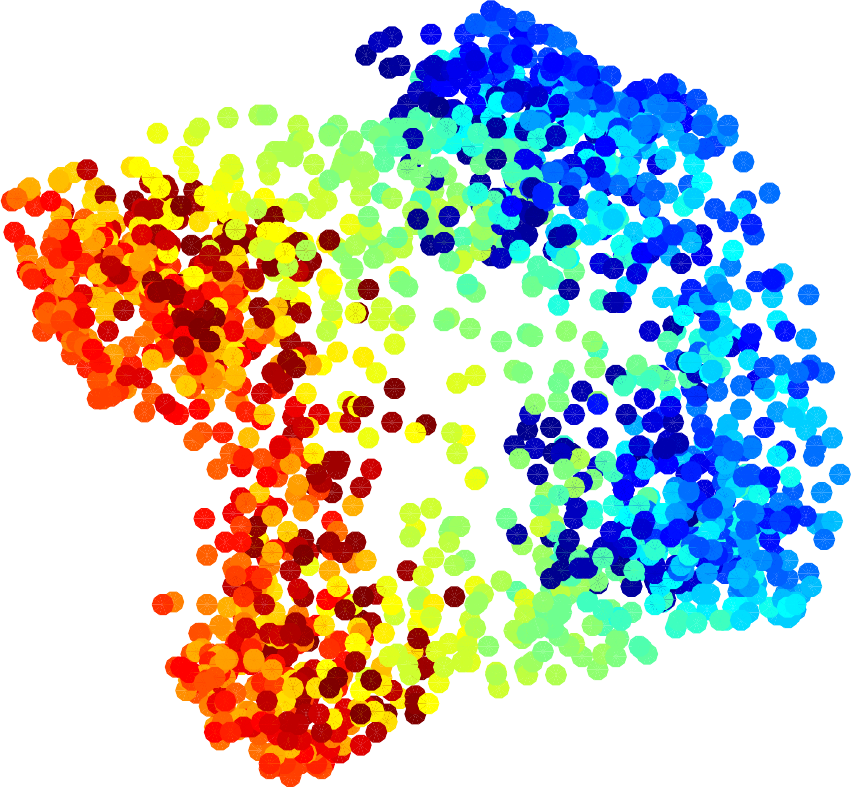} &\includegraphics[scale=0.1]{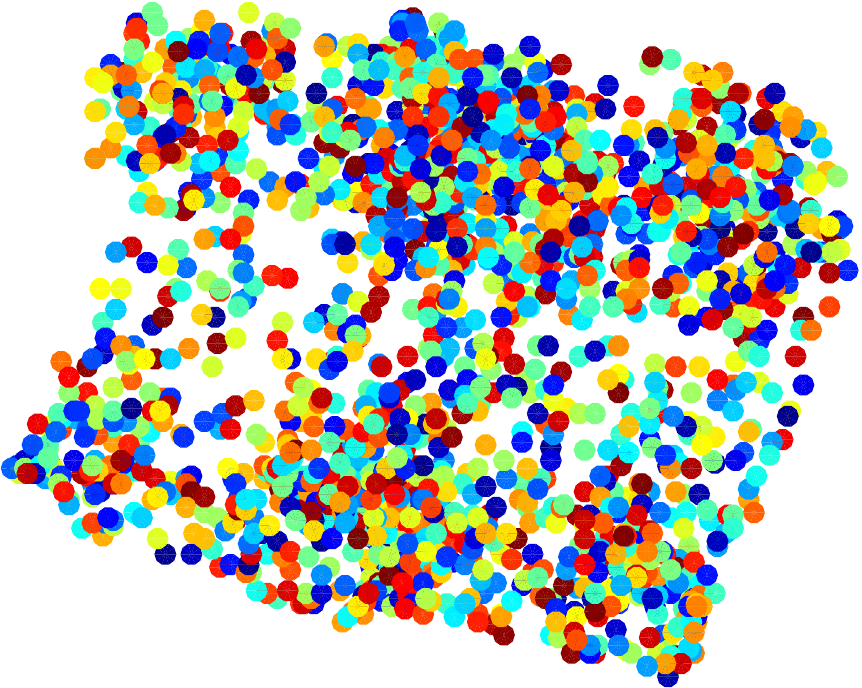} &
\includegraphics[scale=0.1]{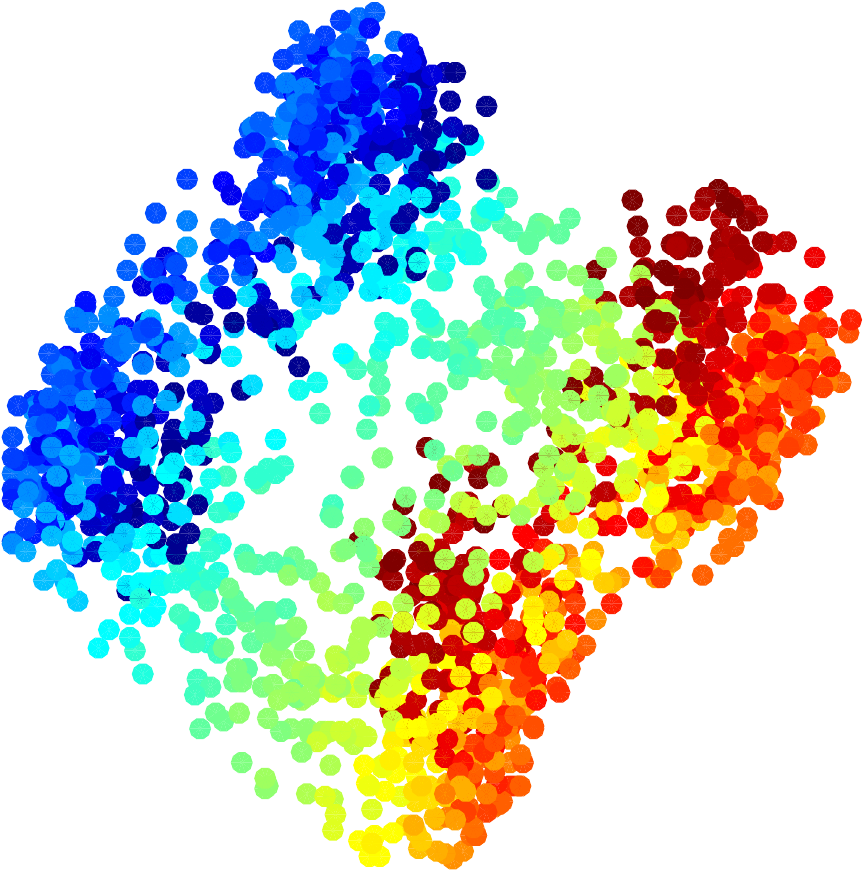} & \includegraphics[scale=0.1]{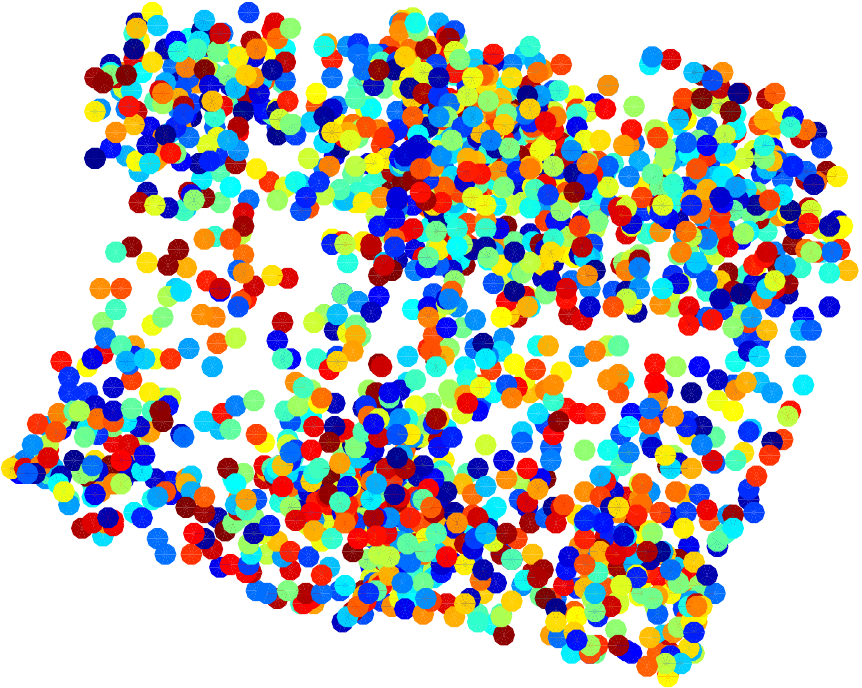}\\
$\boldsymbol{s}^{(3)}$ &
\includegraphics[scale=0.1]{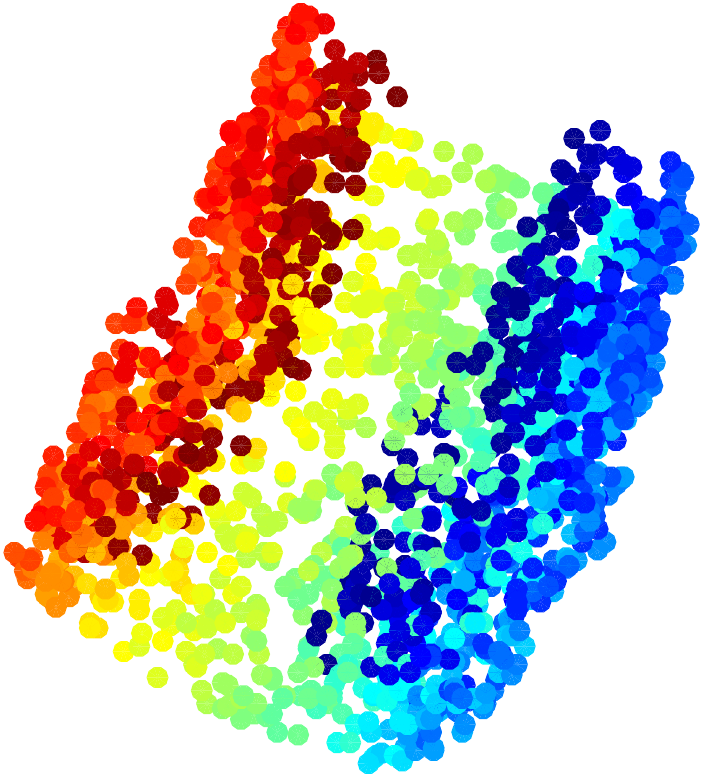} &
\includegraphics[scale=0.1]{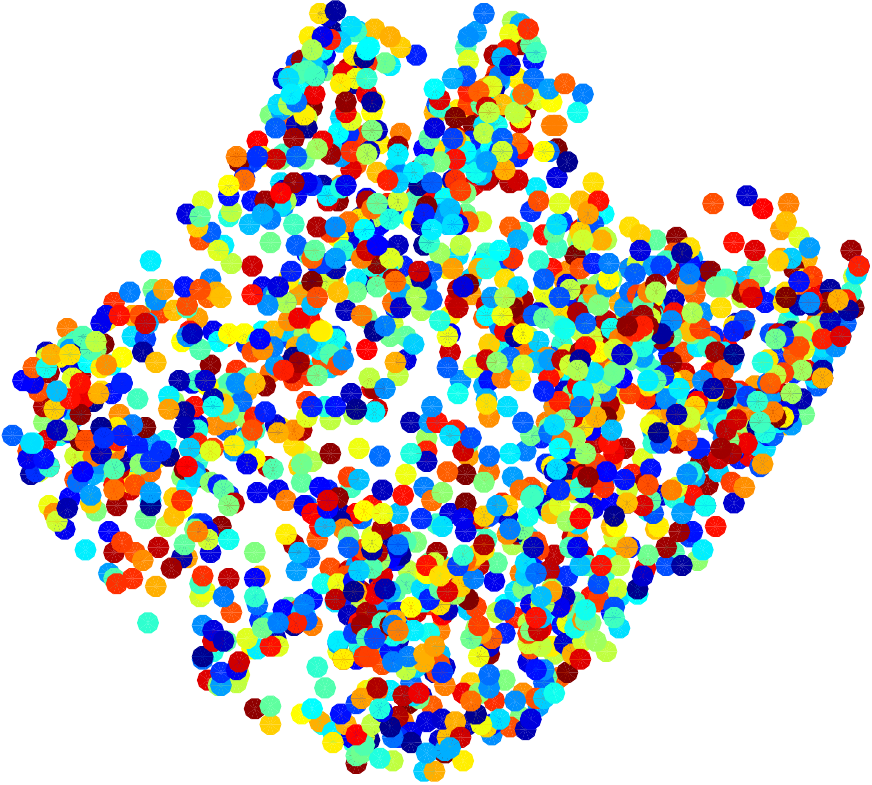} & \includegraphics[scale=0.1]{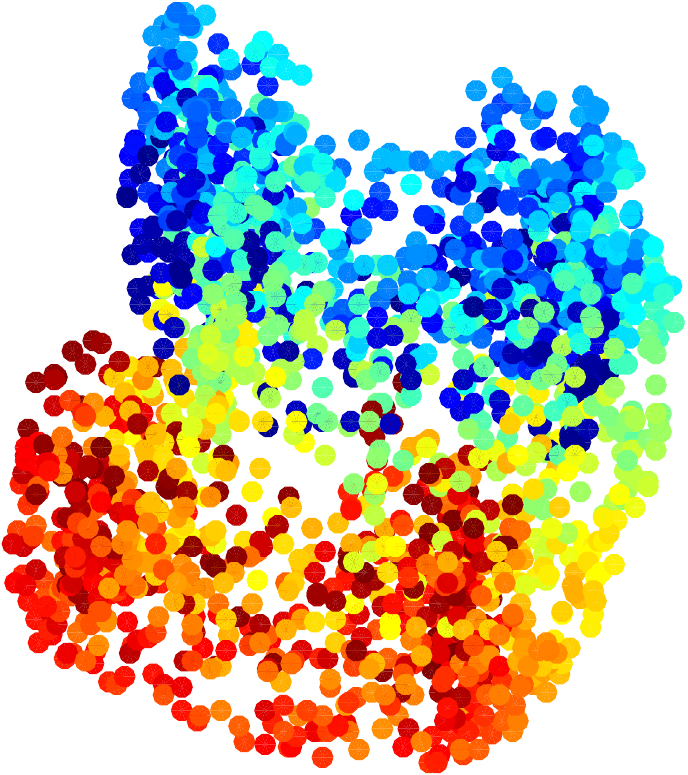} &\includegraphics[scale=0.1]{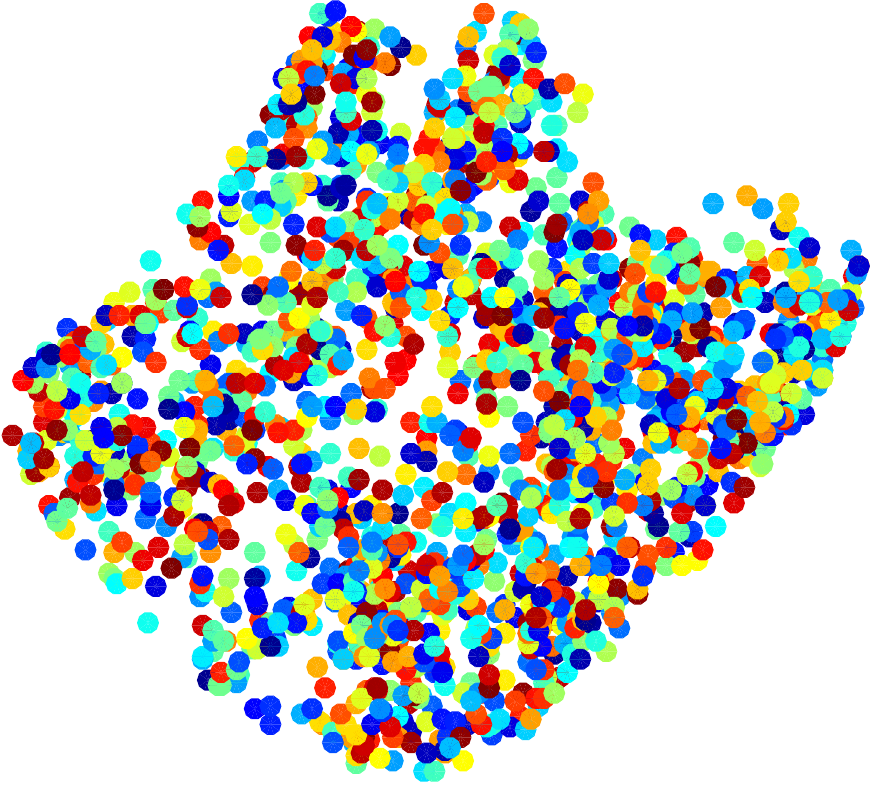} &
\includegraphics[scale=0.1]{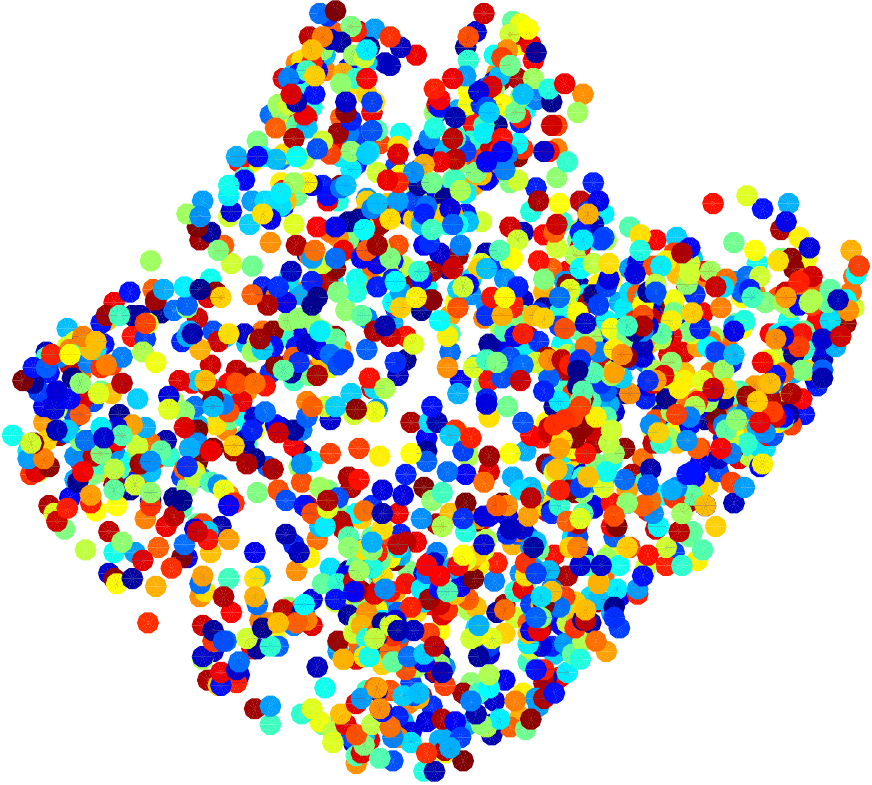} & \includegraphics[scale=0.1]{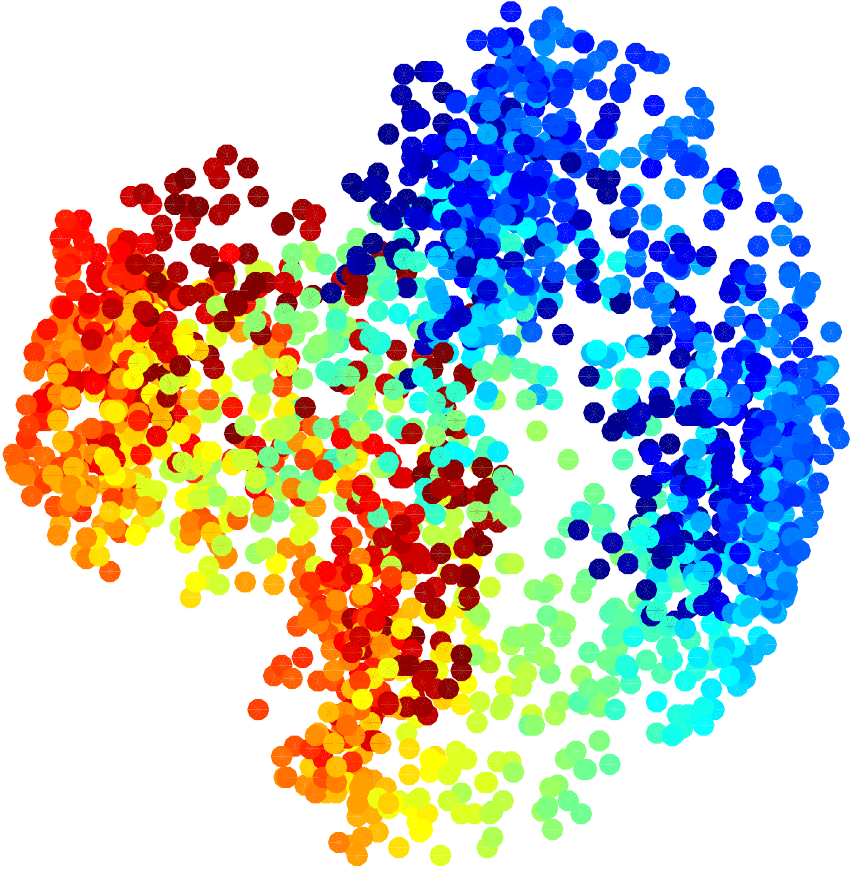}\\
\end{tabular}
\caption{3D embedding obtained by applying diffusion map on a single observer. The subfigures are arranged such that subfigures in each row are obtained from the same observer. The data-points in each column are colored according to different arrow's rotation angles. }
\label{fig:DM}
\end{figure}

The proposed algorithm is applied to the three sets of observations. The obtained embedding is depicted in Figure~\ref{fig:SDM}. The same $3$ dimensional scatter plot of the obtained embedding is shown with different color coding. The subfigures in the top row are colored (from left to right) according to the common variables $\left\{\theta^{(1)}_i\right\}_{i=1}^{N}$, $\left\{\theta^{(2)}_i\right\}_{i=1}^{N}$, $\left\{\theta^{(3)}_i\right\}_{i=1}^{N}$, while the subfigures in the bottom row are colored (from left to right) according to the nuisance variables $\left\{n^{(1)}_i\right\}_{i=1}^{N}$, $\left\{n^{(2)}_i\right\}_{i=1}^{N}$, $\left\{n^{(3)}_i\right\}_{i=1}^{N}$. As in Figure \ref{fig:RP_process}, the $3$ dimensional embedding is rotated, such that the corresponding color gradient is emphasized from the depicted $2$ dimensional point of view.
We can see from the obtained color gradients that the embedding provides a parametrization of only the common variables, meaning that the proposed algorithm manages to extract all $3$ of the common variables $\left( \theta^{(1)},\theta^{(2)},\theta^{(3)} \right)$ (despite having none in common to all $three$ observations), while suppressing all $3$ nuisance observation-specific variables $\left( n^{(1)},n^{(2)},n^{(3)} \right)$.
Upon publication, the Matlab code and data of this toy problem will be made available online.

\begin{figure}[t]
\centering
\begin{tabular}{ccc}
  $\theta^{(1)}$ &   $\theta^{(2)}$ &   $\theta^{(3)}$\\
\hspace{-.4in} \includegraphics[scale=0.2]{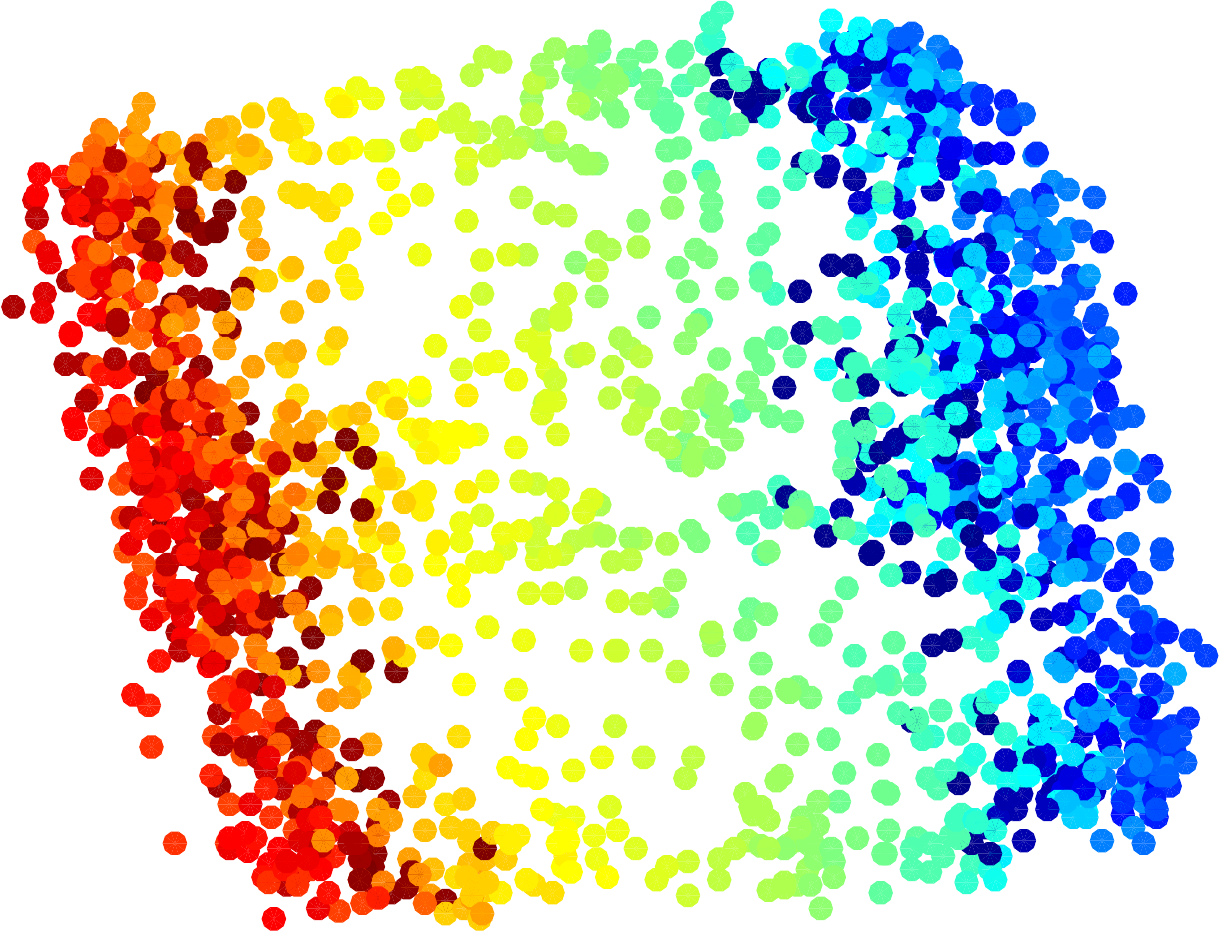} &
\includegraphics[scale=0.2]{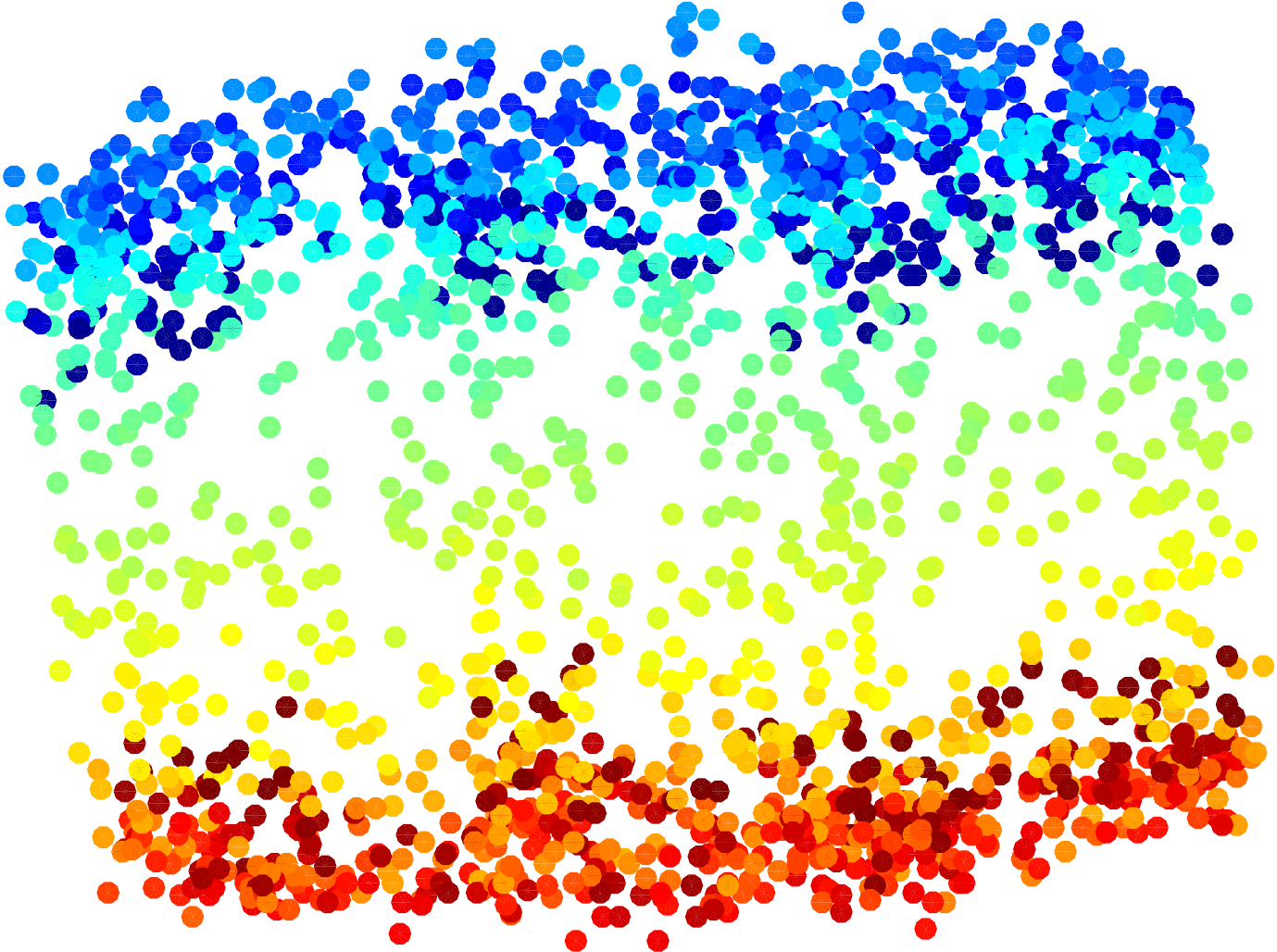} & \includegraphics[scale=0.2]{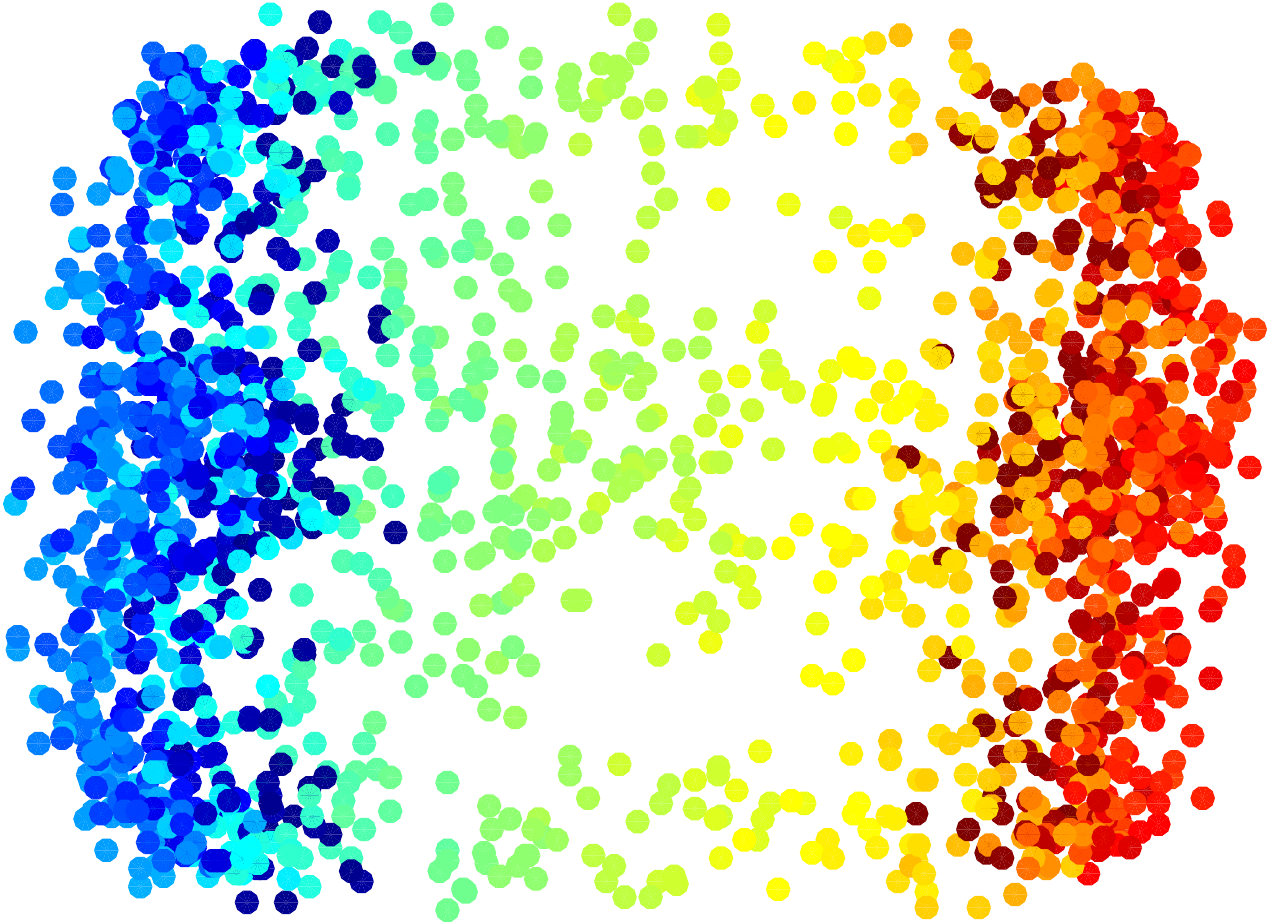}\\  \hdashline{}  
\hspace{-.4in} \includegraphics[scale=0.2]{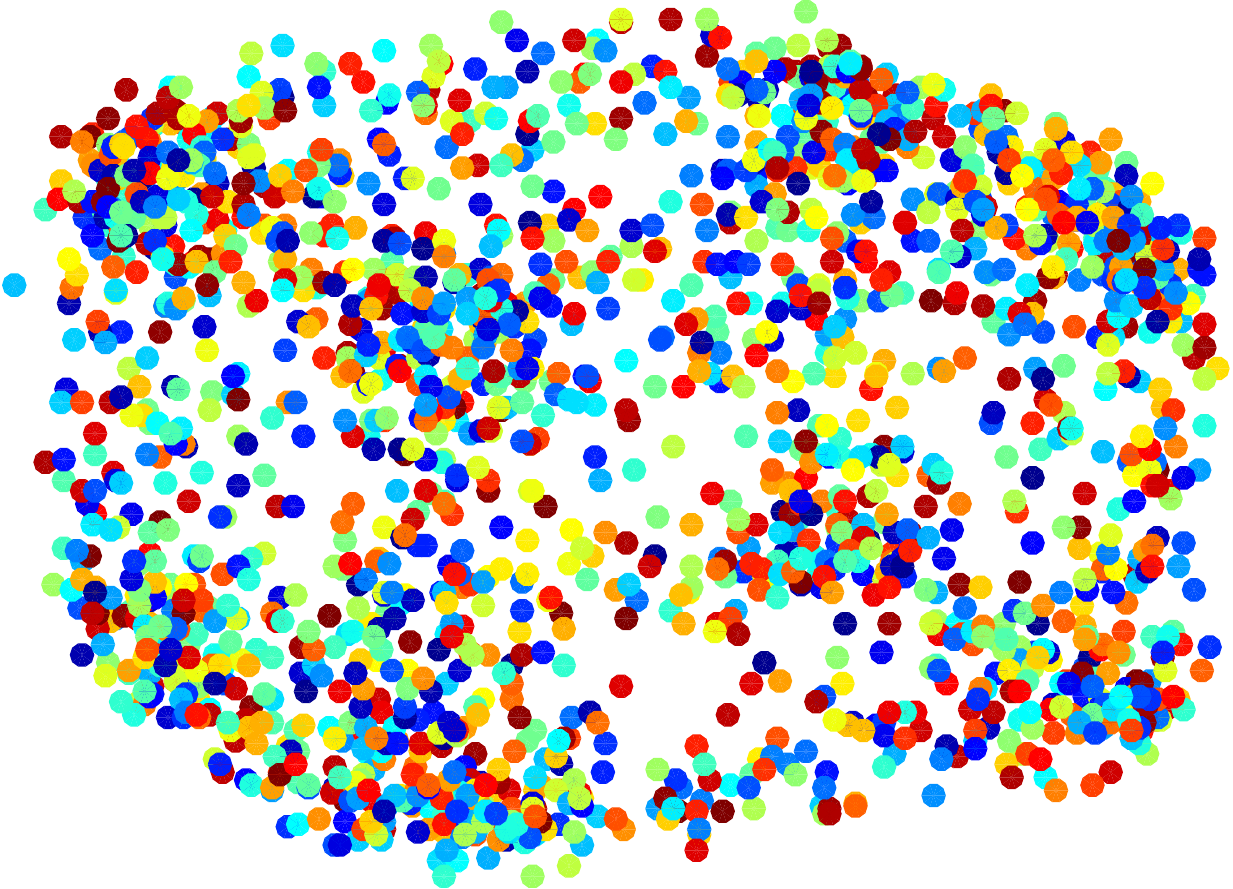} &
\includegraphics[scale=0.2]{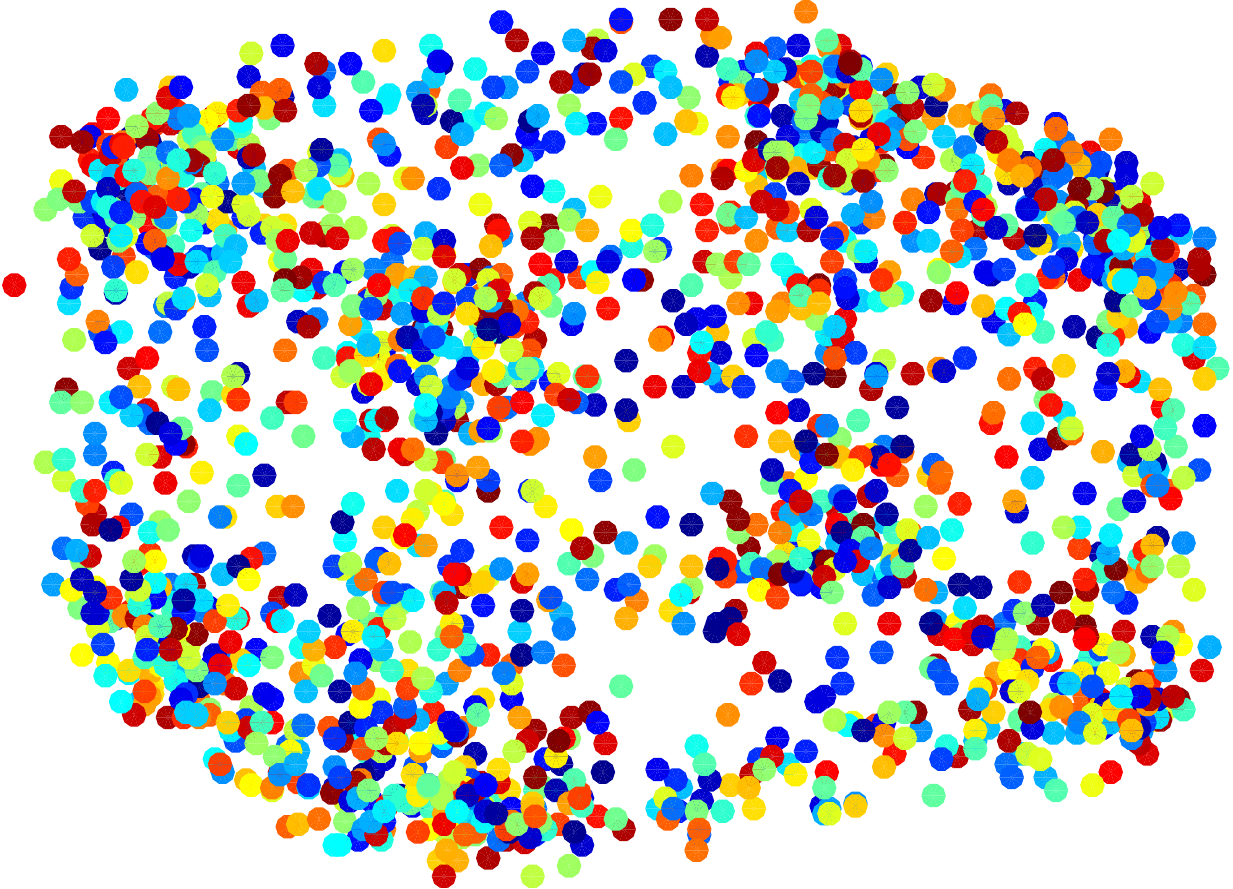} & \includegraphics[scale=0.2]{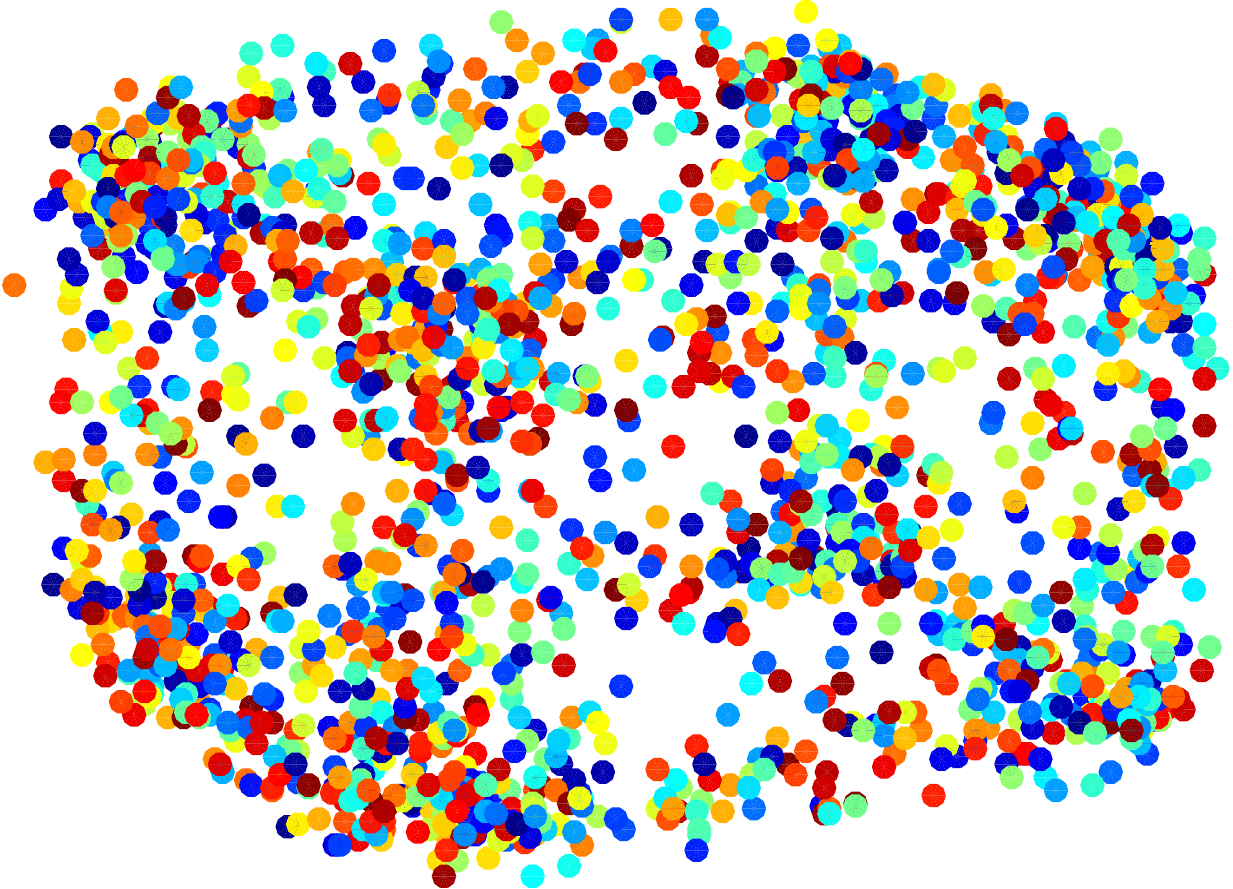}\\
  $n^{(1)}$ &   $n^{(2)}$ &   $n^{(3)}$
\end{tabular}
\caption{3D embedding obtained by applying the proposed algorithm on the observers set. The subplots in the first rows are colored according to the common variables, the subplots in the second row are colored according to the noise variables. As can be seen, the obtained parametrization corresponds to the common variables.}
\label{fig:SDM}
\end{figure}

\section{Application to Sleep Stage Assessment}
\label{sec:Sleep Results}
As mentioned above, the problem of extracting the common hidden variables from multiple data sets taken by different observables can be perceived as a problem of nonlinear filtering. To demonstrate the potential of this particular nonlinear filtering scheme in processing real data, we apply the proposed algorithm to sleep data, where the ultimate goal is to devise an automatic system for sleep stage assessment.

Sleep is a global and recurrent physiological process, which is in charge of the memory consolidation, the learning redistribution, tissue regeneration, immune system enhancement, etc \cite{Lee-Chiong:2008}. The sleep dynamics are characterized by particular temporal physiological features, which are intimately related to the quality of sleep.
The clinically acceptable sleep stage is mainly determined by reading recorded electroencephalogram (EEG) signals based on the Rechtschaffen and Kales (R\&K) criteria \cite{Rechtschaffen_Kales:1968,Iber_Ancoli-Isreal_Chesson_Quan:2007}. In the R\&K criteria, the sleep dynamics are divided into two broad stages: rapid eye movement (REM), and non-rapid eye movement (NREM) \cite{Lee-Chiong:2008}.
The NREM stage is further divided into two shallow stages, which are denoted N1 and N2, and a deep sleep stage, which is denoted N3.
In addition to the interest stemming from physiological aspects, sleep stage assessment has important clinical applications. For example,
REM is associated with perceptual skill improvement \cite{Karni1994}, slow wave sleep is associated with Alzheimer's disease \cite{Kang2009}, poor sleep quality is associated with weaning failure \cite{RocheCampo2010}, etc.
Besides personal health purposes, the sleep quality is also responsible for several public catastrophes \cite{Colten_Altevogt:2006}. These facts indicate the importance of an accurate automatic annotation system for sleep stage assessment and its broad applications.

In the past decades, various automatic annotation methods have been proposed. Those methods mainly extract various features from the EEG recordings for the purpose of studying sleep dynamics \cite{Bajaj_Pachori:2013}, such as time domain summary statistics, spectral or coherence features, time-frequency features, and information entropy, just to name a few \cite{Kannathal_Choo_Acharya_Sadasivan:2005,Blanco_Quiroga_Rosso_Kochen:1995,Geng_Zhou_Yuan_Cai_Zeng:2011}. Recently, a theoretically solid approach suitable for analyzing and estimating the dynamics of the brain activity from recorded EEG signals has been proposed in \cite{TalmonPNAS,TalmonACHA}.
%
A particular aspect of sleep dynamics, which has not gained much research attention in the line of research mentioned above, is that sleep is not localized solely in the brain and is reflected in other physiological systems as well. 
%
For example, the regulation of mechanoreceptor and the chemoreceptor leads to breathing pattern variability in the respiratory signal. We have a remarkably regular breathing during N3 stage and irregular breathing with fast varying instantaneous frequency and amplitude during REM stage. Those physiological phenomena motivated various studies to explore the relation between the sleep stage and the patterns in the respiratory signals, e.g. \cite{Chung_Choi_Kim_Lim_Choi_Jeong_Park:2007,Guerrero-Mora_Elvia_Bianchi_Kortelainen:2012,Sloboda_Das:2011}.
Physiologically, these variations are not originated from the same controller, and phenomenologically do not have the same patterns in the recorded time series. Thus, while we could observe the sleep dynamics via observing the characteristics of different sensors, each of them reflects only part of the sleep dynamic, and is complicated by the nature of the sensor.

Based on the above physiological facts, an automatic approach for assessing the sleep stage was presented in \cite{wu2015assess}. It relies on the assumption that there exist hidden low-dimensional physiological processes driving the sleep dynamics, and hence the accessible measured signals. However, these hidden processes may be deformed by the observation procedures; each observation (e.g., an EEG channel measuring brain activity or a chest belt measuring respiration) can be influenced by nuisance factors, which are sensor- or channel-specific (e.g., the specific type of sensors and their exact positions), yet our interest is in the intrinsic variables related to the sleep stages. In \cite{wu2015assess}, empirical intrinsic geometry (EIG) method \cite{TalmonPNAS,TalmonACHA}, which is based on nonlinear independent component analysis \cite{Coifman_Singer:2008} and was proven to be invariant to the measurement modality, was applied to build an intrinsic representation of the measured data.
In \cite{lederman2015icassp}, this method was extended to a pair of sensors. It was shown that by analyzing the measurements taken simultaneously from $2$ sensors, a more reliable intrinsic representation of the sleep dynamics can be obtained, compared with the analysis based only on a single signal.

In this section we extend the algorithm shown in \cite{lederman2015icassp}, and process jointly multiple channels. We show that extracting the underlying common variables from multiple data sets acquired in different channels recovers systematically a representation, which is well correlated with the sleep stage.
The analogy to the setting described in this paper is as follows.
We assume that the sleep dynamics are intimately related to hidden controllers that affect the respiratory as well as the brain neural system. These controllers are not accessible to us; yet, they can be recovered by analyzing observations from multiple channels/sensors, each captures different, partial yet complementary aspects of it.
Under this assumption, our interest is in obtaining the intrinsic variables underlying the measurements related to these controllers.
On the one hand, by analyzing multiple observation channels we can gather more information on the hidden controllers.
On the other hand, observations from each channel might be deformed by the different acquisition and measurement modalities and may be affected by noise and interferences, specific to the particular (type of) sensor.
In the context of this work, this tradeoff is addressed by defining the intrinsic variables (related to the hidden controllers of interest) as those which are not sensor-specific, and hence, the variables of interest are those that are common among at least  two observables.

Twenty subjects without sleep apnea were chosen for this study. The demographic characteristics of these individuals fall within the normal ranges. We used recordings of $6$ hours per subject, which were performed in the sleep center at Chang Gung Memorial Hospital (CGMH), Linkou, Taoyuan, Taiwan. The institutional review board of the CGMH approved the study protocol (No. 101-4968A3) and the enrolled subjects provided written informed consent. See \cite{wu2015assess} for more details regarding the experimental setting and the collected data.

We build the common graph according to Algorithm \ref{alg:CG} for extracting the common hidden variables separately to two sets of sensors.
The first set includes $3$ signals: abdominal and chest motions, which are recorded by piezo-electric bands, and airflow, which is measured using thermistors and nasal pressure, all $3$ at sampling rate of $100$~Hz. The second set comprises recordings from $4$ EEG channels: C3A2, C4A1, O1A2 and O2A1 at sampling rate of $200$~Hz.
The recorded respiratory signals are denoted by $R_m, m=1,\ldots,3$ and the EEG signals are denoted by $E_m, m=1,\ldots,4$.

Prior to the application of our method, each of the single-channel recordings was preprocessed by applying the scattering transform as in \cite{wu2015assess}, which was shown to improve the regularity and stability of signals with respect to various deformations \cite{Mallat2012}.
We then apply Algorithm \ref{alg:CG} separately twice: once to the respiratory set, and once to the EEG set.

In order to demonstrate the inherent ``sensor selection'' capability of the proposed method, for each set of measurements we added an artificial ``pure noise'' sensor to simulate possible sensor failure.
Because our processing pipeline begins with the application of the scattering transform, which automatically degenerates any stationary noise, the noise sensor consists of a non-stationary sequence generated by modulating a sine-wave according to
\begin{align}
\label{eq:noise_sensor}
  \phi(\tau)&=\frac{1}{2}+\frac{1}{4}\sin\left(\frac{2\pi \tau}{512\cdot10}\right) \\
  n(t)&=\sin\left(2\pi \int_0^t \phi(\tau)d\tau\right) \nonumber
\end{align}
where $n(t)$ is the continuous time signal and $\phi(\tau)$ can be viewed as the instantaneous frequency of $n(t)$.
We sample the obtained modulated sine-wave $n(t)$ at a sampling rates of $200$~Hz and $100$~Hz for the EEG set and for the respiratory set, respectively.
It should be noted that this particular non-stationary ``noise'' implementation was chosen just for the sake of demonstration, and any other non-stationary sequence could be chosen instead.

We compare the results of the common graph algorithm, analyzing multiple sensors, with the results attained by the standard \gls{DM} applied separately to each individual sensor.
In addition, we compare the results to two competing schemes analyzing multiple sensors.
In the first scheme, we concatenate the scattering transform components from each sensor, and then, apply the standard \gls{DM}.
We note that conceptually this scheme takes into account the information captured by all the sensors without any filtering.
We refer to the first scheme as the \emph{concatenation scheme}.
In the second scheme, we apply \gls{AD} to the entire set of sensors. Namely, we calculate the diffusion kernel $\mathbf{K}^{(m)}$ for the $m$th sensor, where $m=1\ldots M$ and build an \gls{AD} kernel based on the product of all the kernels, that is, $\mathbf{K}=\mathbf{K}^{(1)}\mathbf{K}^{(2)}\cdots \mathbf{K}^{(M)}$. Then, we apply \gls{DM} with this \gls{AD} kernel.
This scheme takes into account only the information that is captured simultaneously by all of the sensors, namely $\bigcap_{m \neq n}\left(\mathcal{S}^{(m)}\bigcap\mathcal{S}^{(n)}\right)$ , thereby performing excessive filtering.
We refer to the second scheme as the \emph{multiplication scheme}.

The calculation of the affinity matrices, which is a core element in the tested methods, is carried out using the Mahalanobis distance variant presented in \cite{Coifman_Singer:2008}, which was discussed in Section \ref{subsec:common_graph}.
To be able to depict information embodied in more than three eigenvectors, we randomly project the embeddings attained by the competing algorithms to $3$ dimensions. This allows us to visually inspect the portion of the relevant information and the portion of the nuisance information manifested in the representations obtained by the different algorithms. We use the same projection in all tested methods. 

The \glspl{RP} of the embeddings are depicted in Figure \ref{fig:sleep_embed}. The \glspl{RP} based on the single channel \gls{DM} applied to the O2A1 EEG channel and to the airflow channel are depicted in the top row. The \glspl{RP} based on the concatenation scheme, multiplication scheme and the proposed algorithm are depicted in the second, third and bottom rows, respectively. The embeddings depicted on the left column are based on the EEG set, and on the right column are based on the respiratory set.
Each embedded point is colored according to its respective sleep stage, as identified by a human expert. Importantly, the information on the sleep stage (e.g., the color) was not taken into account in the algorithms forming the embeddings.

\begin{figure} [t] \centering
    \subfloat[]
    {\includegraphics[scale=0.25]{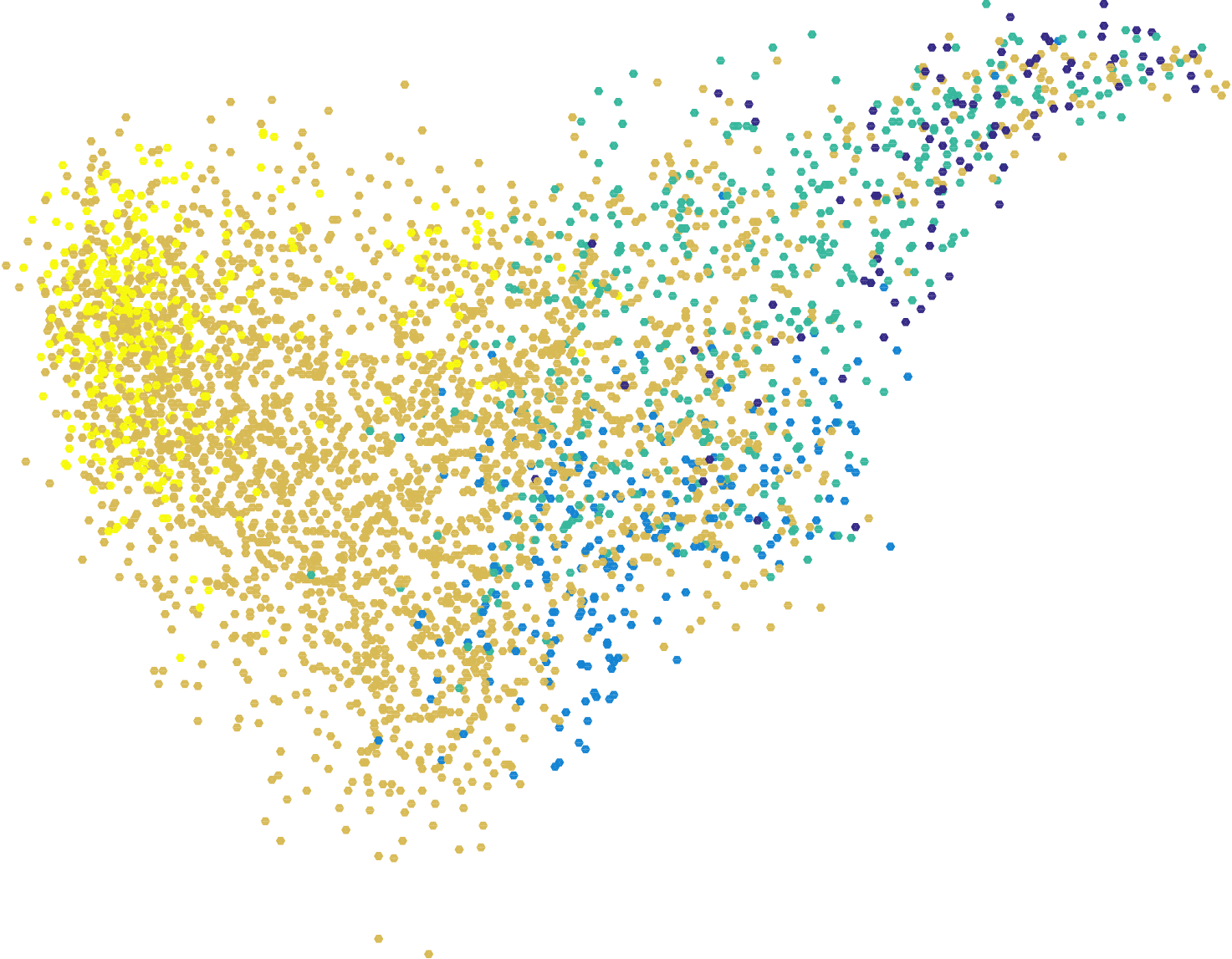}
    \label{fig:sleep_embed_DM_eeg}}
    \hspace{1cm}
    \subfloat[]
    {\includegraphics[scale=0.25]{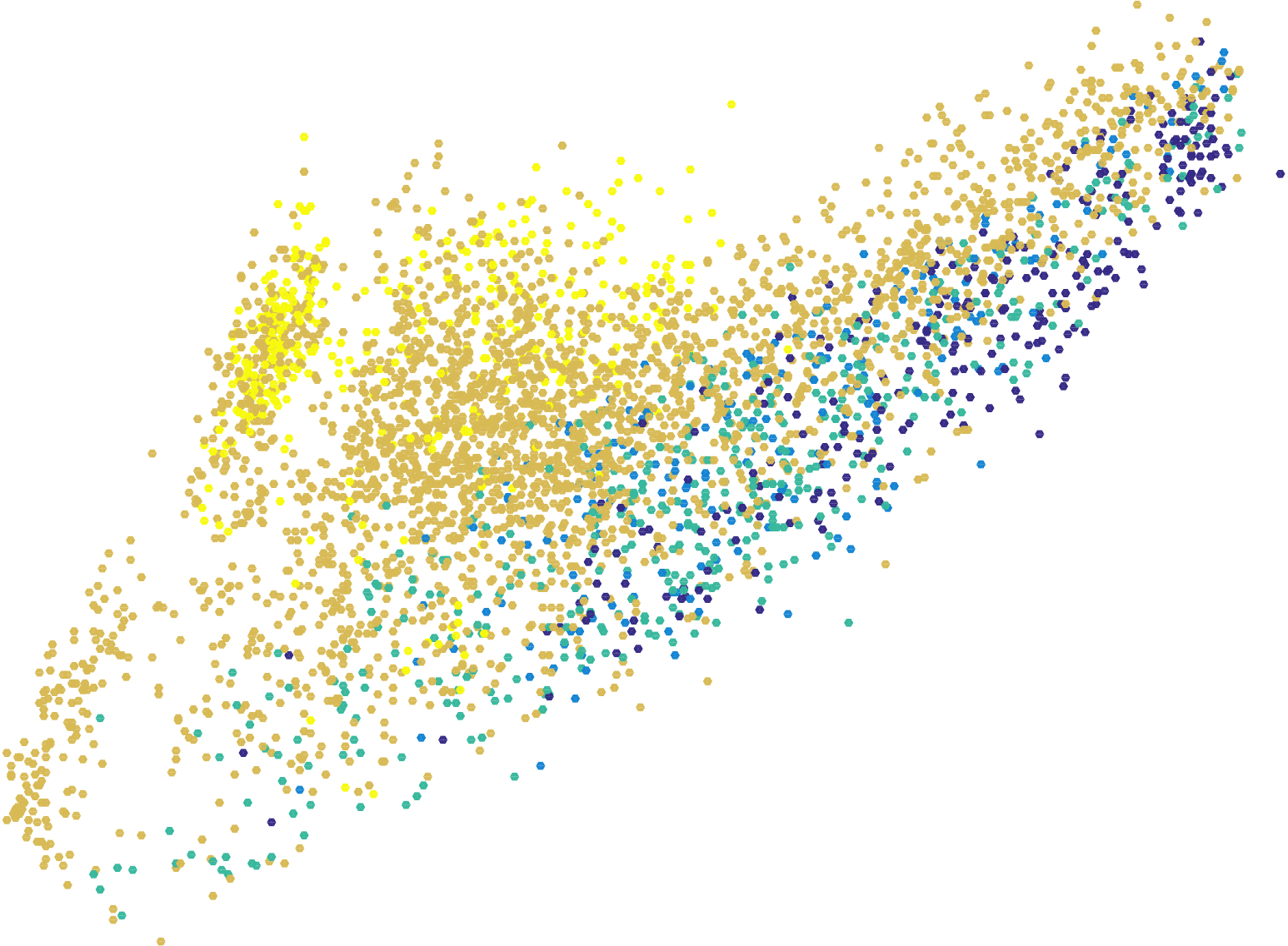}\label{fig:sleep_embed_DM_resp}}
    \\
    \subfloat[]
    {\includegraphics[scale=0.25]{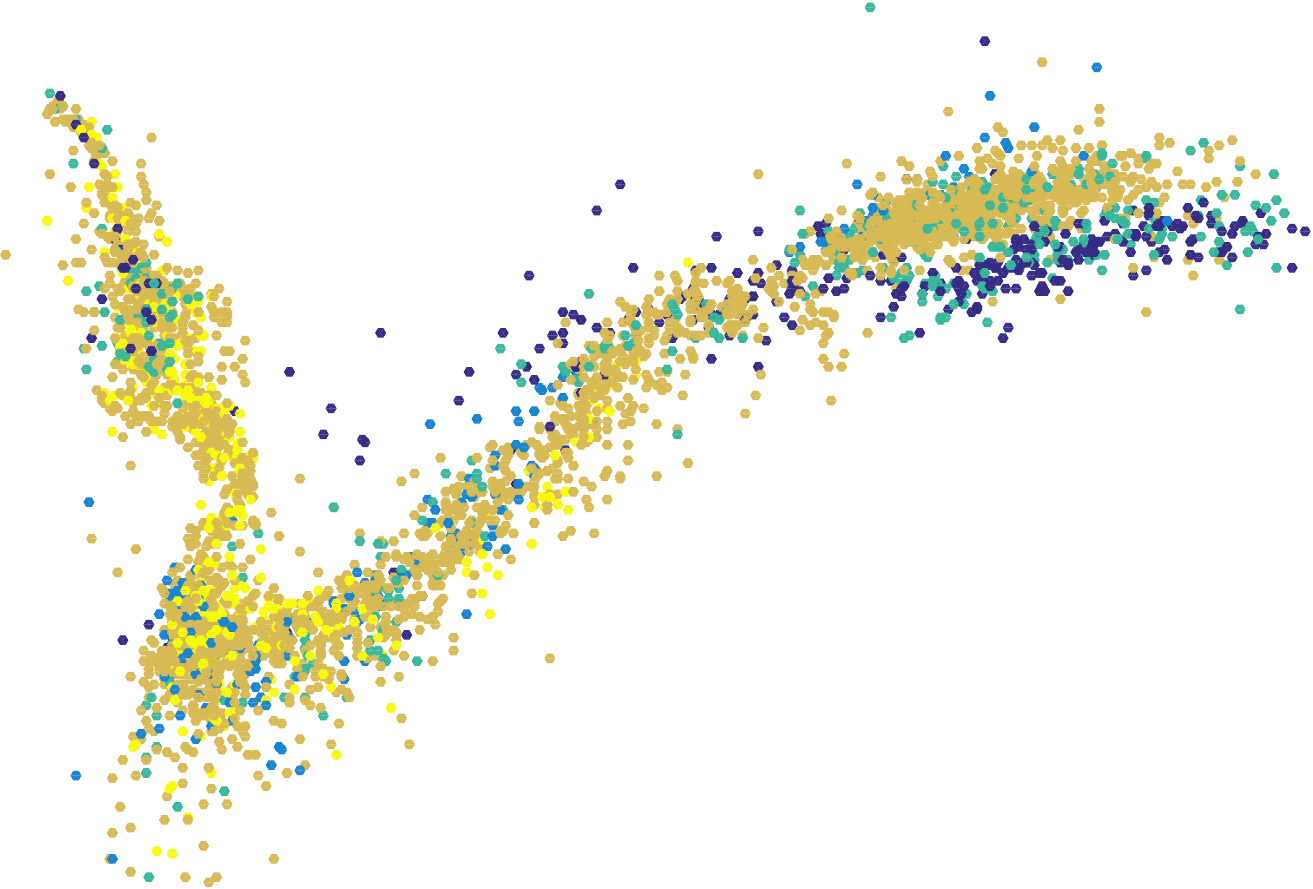}\label{fig:sleep_embed_concat_eeg}}
    \hspace{1cm}
    \subfloat[]
    {\includegraphics[scale=0.25]{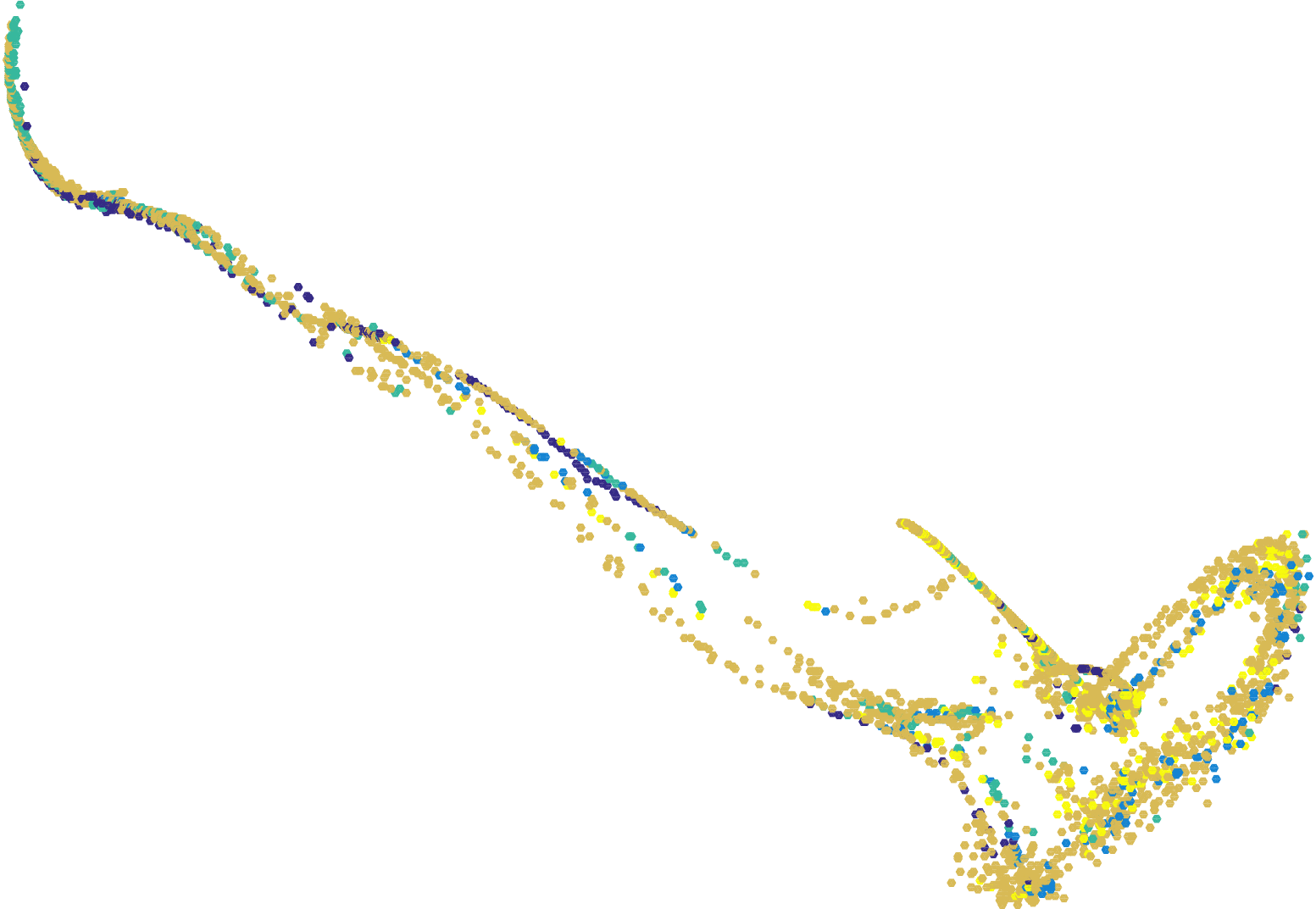}\label{fig:sleep_embed_concat_resp}}
    \\
    \subfloat[]
    {\includegraphics[scale=0.25]{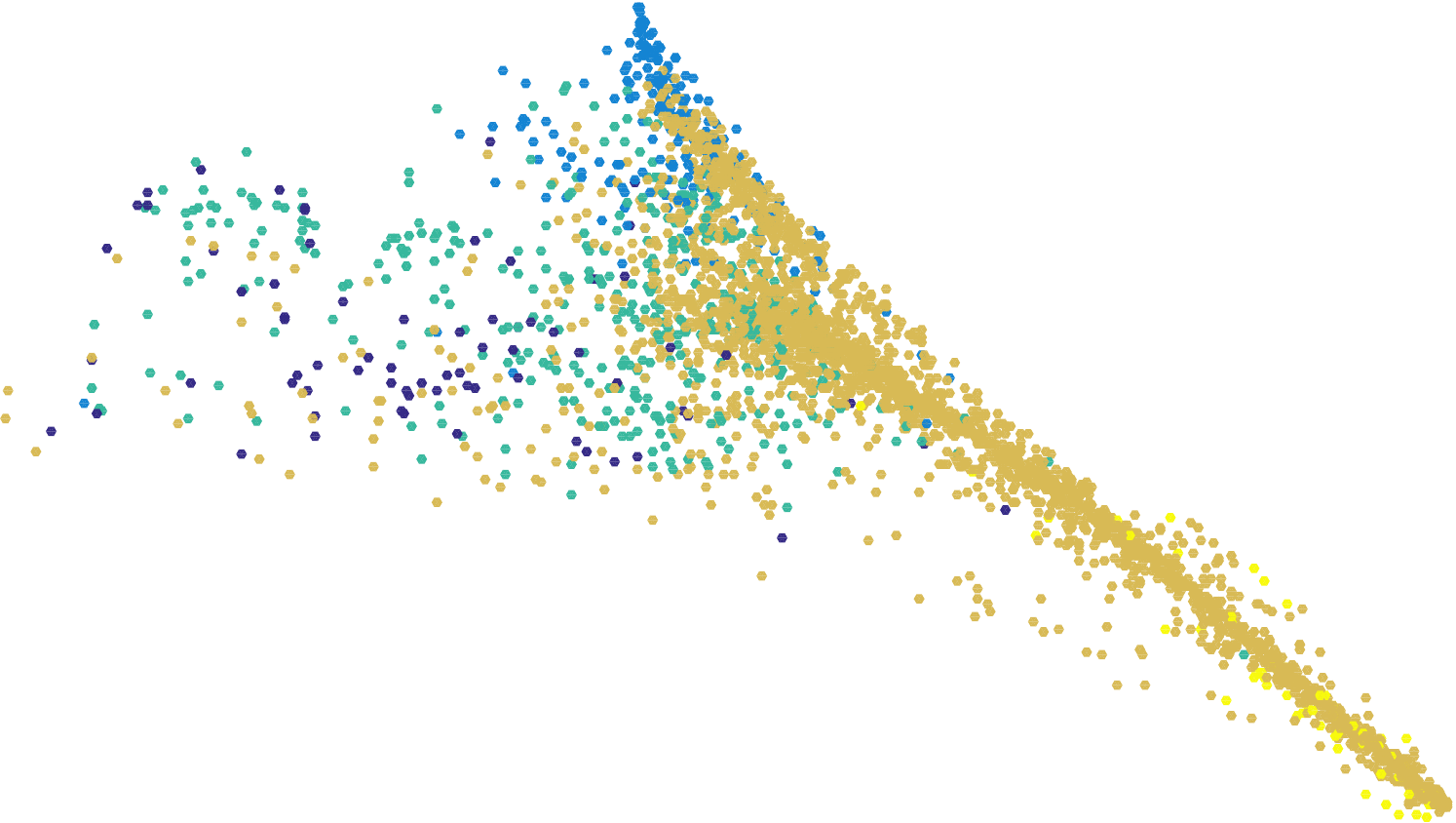}\label{fig:sleep_embed_mul_eeg}}
    \hspace{1cm}
    \subfloat[]
    {\includegraphics[scale=0.25]{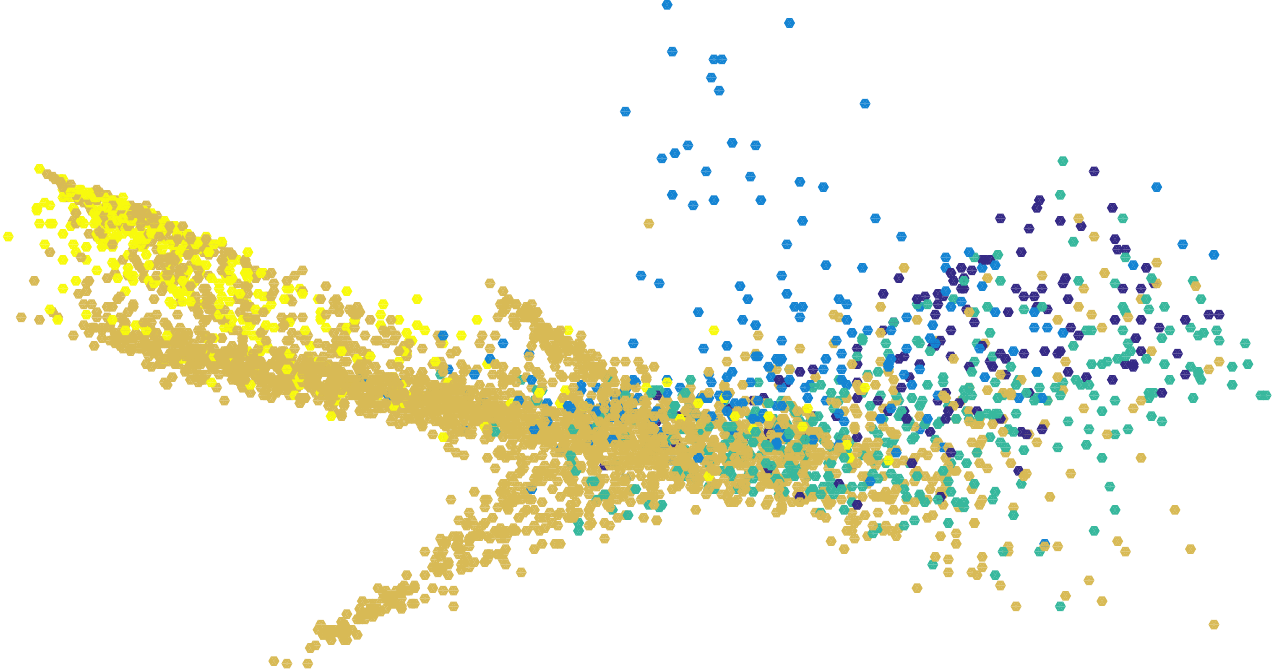}\label{fig:sleep_embed_mul_resp}}
    \hspace{1cm}

    \subfloat[]
    {\includegraphics[scale=0.25]{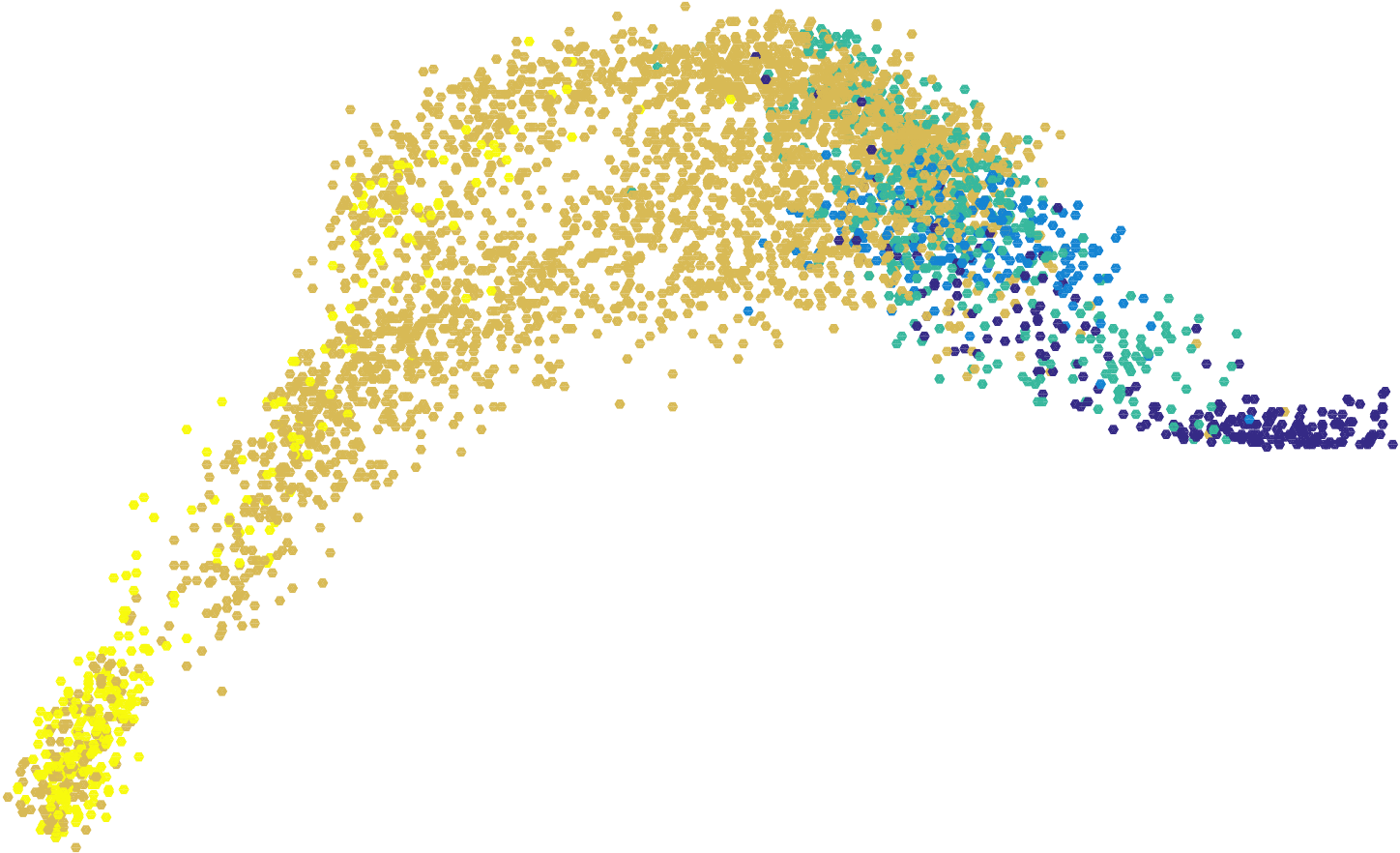}\label{fig:sleep_embed_CG_eeg}}
    \hspace{1cm}
    \subfloat[]
    {\includegraphics[scale=0.25]{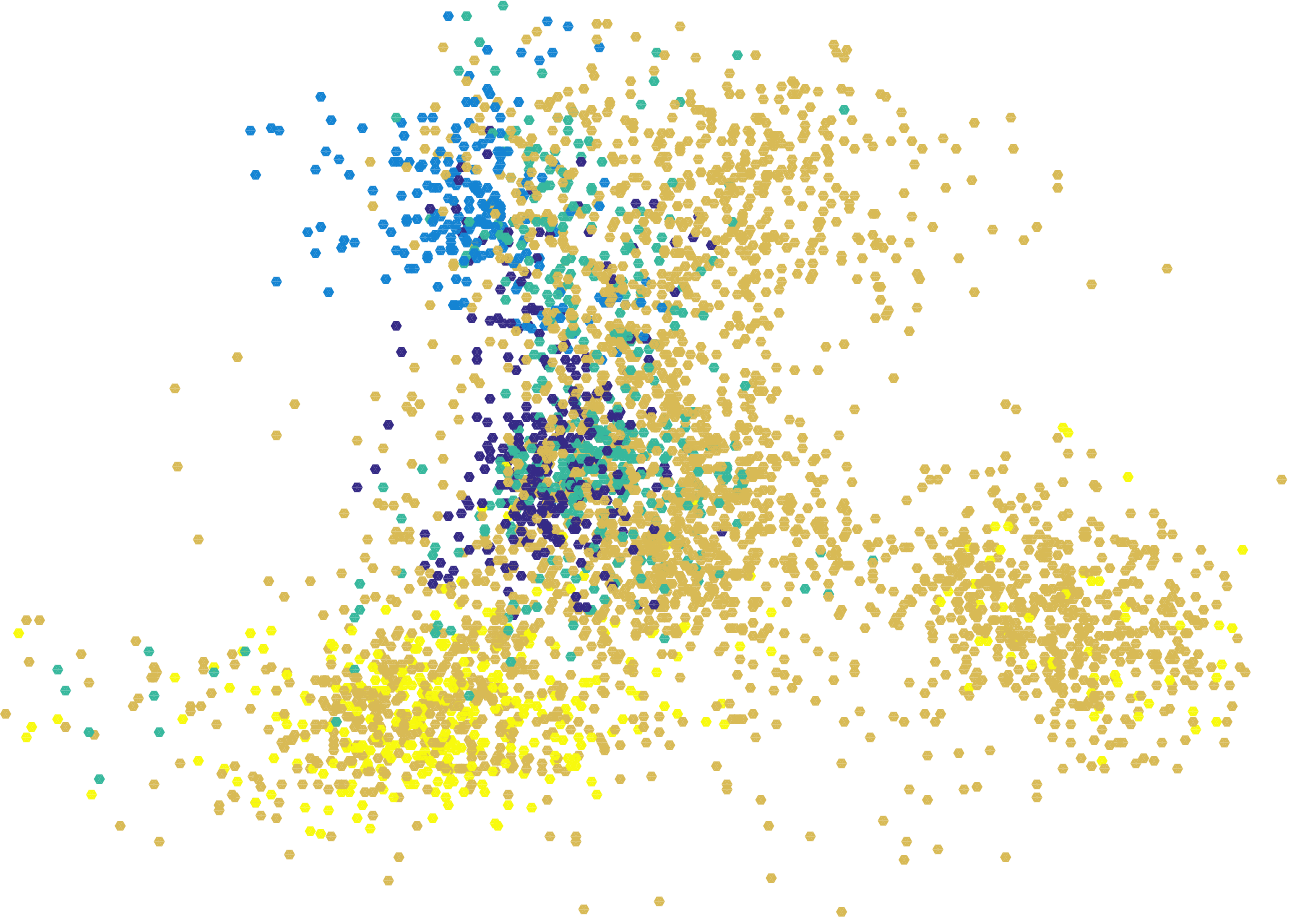}\label{fig:sleep_embed_CG_resp}}\\
   \caption{The 3D \glspl{RP} of the embeddings obtained by single-sensor \gls{DM} (top row), the concatenation scheme (second row), the multiplication scheme (third row), and Algorithm \ref{alg:CG} (bottom row). The points are colored according to the sleep stage. The embeddings are based on the O2A1 channel in (a) and on the airflow measurements in (b). From the second row to the bottom row, the embeddings on the left column are based on the EEG set, and on the right column are based on the respiratory set.}
   \label{fig:sleep_embed}
\end{figure}

Figure \ref{fig:sleep_embed} provides a visual illustration of the obtained parametrization with respect to the sleep stage.
By comparing the Figure \ref{fig:sleep_embed_DM_eeg} and Figure \ref{fig:sleep_embed_DM_resp} with Figure \ref{fig:sleep_embed_CG_eeg} and Figure \ref{fig:sleep_embed_CG_resp} we can observe the improvement achieved by the additional information obtained from combining information from multiple sensors.
In addition, by comparing Figures \ref{fig:sleep_embed_concat_eeg}-\ref{fig:sleep_embed_mul_resp} with Figure \ref{fig:sleep_embed_CG_eeg} and Figure \ref{fig:sleep_embed_CG_resp} we observe the improvement achieved by filtering out of the sensor-specific nuisance variables.
In these comparisons, it can be seen that the embeddings obtained by using the proposed algorithm results in a better parametrization of the sleep stage evaluation; different sleep states appear to be more separated, especially in the case of the respiratory signals.

To objectively assess the quality of the obtained embeddings, we use multi-class support vector machine (SVM). To ensure convergence and to prevent overfitting, we process only the $15$ most dominant eigenvectors from each embedding. It should be noted that due to the obtained fast decay of the eigenvalues, taking only the $15$ most dominant eigenvectors preserves the geometrical structure of the data.
We randomly partition the data into $2$ sets -- a training set (consisting of $75\%$ of the samples) and a validation set (consisting of $25\%$ of the samples).
The validation set contains $1,250$ time segments, which consist (on average) of $13.2\% (165)$ segments labeled as awake stage, $10\% (125)$ segments labeled as REM stage, $11.2\% (140)$ segments labeled as N1 stage, $49.6\% (620)$ segments labeled as N2 stage and $16\% (200)$ segments labeled as N3 stage.
The trained classifier is used to classify the sleep stage in the validation set.
We repeat this classification $10$ times, for different randomly chosen partitions of training and validation sets.
The average classification results for each scheme are depicted in Table \ref{tab:SVM_prediction_errors}.
The obtained classification results achieved by the proposed algorithm are superior compared to the obtained results from other schemes, both in the case of the EEG set and in the case of the respiratory set.
In these results, the advantages of proper filtering are evident, as it can be seen that in contrast to the proposed algorithm, the concatenation scheme and the multiplication scheme attain inferior classification results, and in some cases, their results are comparable to the results achieved by processing data from only a single sensor. In the case of the multiplication scheme this may be caused by too excessively filtering. In the case of the concatenation scheme, where no filtering is applied, this may be caused by the existence of interferences and noise.

The results in Table \ref{tab:SVM_prediction_errors} may provide additional insights related to the sleep dynamics that extend the scope of the evaluation of the algorithms.
The classification results achieved by the multiplication scheme are inferior comparing to the results achieved by single-sensor schemes in the case of the respiratory set, where as in the case of the EEG set the achieved results are similar to the single-sensor schemes. This supports the hypothesis that different EEG recording exhibit more homogeneous geometrical structures, with possibly less noise and fewer distortions, compared to the data acquired through the different respiratory recordings.

The homogeneity of the EEG set might explain another interesting observation stemming from these classification results.
In the case of the EEG set, we can see that combining the information acquired from multiple sensors using the proposed algorithm results with superior results, even compared to the results that would have been achieved using the best single-sensor scheme. This implies that the proposed algorithm manages to simultaneously cancel the effect of the additional noise-sensor as well as to properly integrate the information embodied in the multiple sensors.
Conversely, in the case of the respiratory set, we can see that even though the proposed algorithm manages to improve the results achieved by the CFlow channel, it did not manage to improve the results achieved by the single sensor schemes based on the ABD or the THO channel. Yet, it did manage to cancel the effect of the noise-sensor, but not as successfully as in the case of the EEG set. In this regard, it is worth emphasizing that the evaluation of the results from each sensor and from each scheme are based on unknown sleep stage labelling. Thus, the ``quality'' of the different sensors are not known in advance, and obtaining a result from our sensor fusion scheme that is comparable to the results attained by the best single sensor is still of value.
\begin{table} [t]
\centering
\caption{Classification results using SVM. The prediction errors (standard deviations) based on the different embeddings are presented. The total error (standard deviation) is calculated by a weighted mean of the prediction errors (standard deviations) in each sleep stage. (a) The classification based on the respiratory set. (b)  The classification based on the EEG set.}
\begin{tabular}{ |c||c|c|c|c|c||c| }
\hline
\multicolumn{7}{|c|}{Prediction errors based on the respiratory set} \\
\hline
Sensor/Scheme & Awake & REM & N1 & N2 & N3 & Total \\
\hline \hline
CFlow&0.383&0.258&0.545&0.179&0.244&0.264 (0.073)\\
ABD&0.299&0.154&0.426&0.162&0.185&0.209 (0.051)\\
THO&0.262&0.15&0.412&0.153&0.164&0.196 (0.051)\\
Noise&0.559&0.513&0.583&0.582&0.529&0.562 (0.039)\\
Concatenation scheme&0.476&0.474&0.552&0.516&0.5&0.507 (0.039)\\
Multiplication scheme&0.343&0.262&0.555&0.265&0.265&0.307 (0.069)\\
Common Graph&0.252&0.183&0.443&0.178&0.201&0.22 (0.048)\\
\hline
\multicolumn{7}{c}{(a)} \\
\multicolumn{7}{c}{} \\
\hline
\multicolumn{7}{|c|}{Prediction errors based on the EEG set} \\
\hline
Sensor/Scheme & Awake & REM & N1 & N2 & N3 & Total \\
\hline \hline
O1A2&0.267&0.281&0.624&0.132&0.273&0.25 (0.056)\\
O2A1&0.283&0.25&0.603&0.142&0.24&0.244 (0.069)\\
C4A1&0.305&0.273&0.623&0.139&0.291&0.258 (0.068)\\
C3A2&0.298&0.276&0.619&0.132&0.289&0.254 (0.06)\\
Noise&0.551&0.499&0.579&0.58&0.53&0.557 (0.042)\\
Concatenation scheme&0.342&0.325&0.499&0.307&0.325&0.34 (0.039)\\
Multiplication scheme&0.219&0.191&0.47&0.238&0.2&0.252 (0.045)\\
Common Graph&0.227&0.153&0.425&0.133&0.179&0.188 (0.036)\\

\hline
\multicolumn{7}{c}{(b)} \\
\multicolumn{7}{c}{} \\
\end{tabular}
\label{tab:SVM_prediction_errors}
\end{table}

Figure \ref{fig:sleep_embed_freq} further illustrates the poor embeddings and classification results achieved by the concatenation scheme. The same embeddings, which are depicted in Figure \ref{fig:sleep_embed}, are presented here, but this time with a different color -- now according to the instantaneous frequency of the noise sensor \eqref{eq:noise_sensor}.
As can be observed, in contrast to the embeddings achieved by the proposed algorithm or by the multiplication scheme, the embeddings achieved by the the concatenation scheme are well correlated with the instantaneous frequency in the noise sensor, indicating that the undelying structure is wrongly captured.
This further illustrates the difference between the filtering effects of our algorithm and other methods, which are based to the fusion of data from all the sensors \cite{davenport2010joint,keller_audio_visual_2010,salhov2016multi}.

\begin{figure}[t] \centering
    \subfloat[]
    {\includegraphics[scale=0.25]{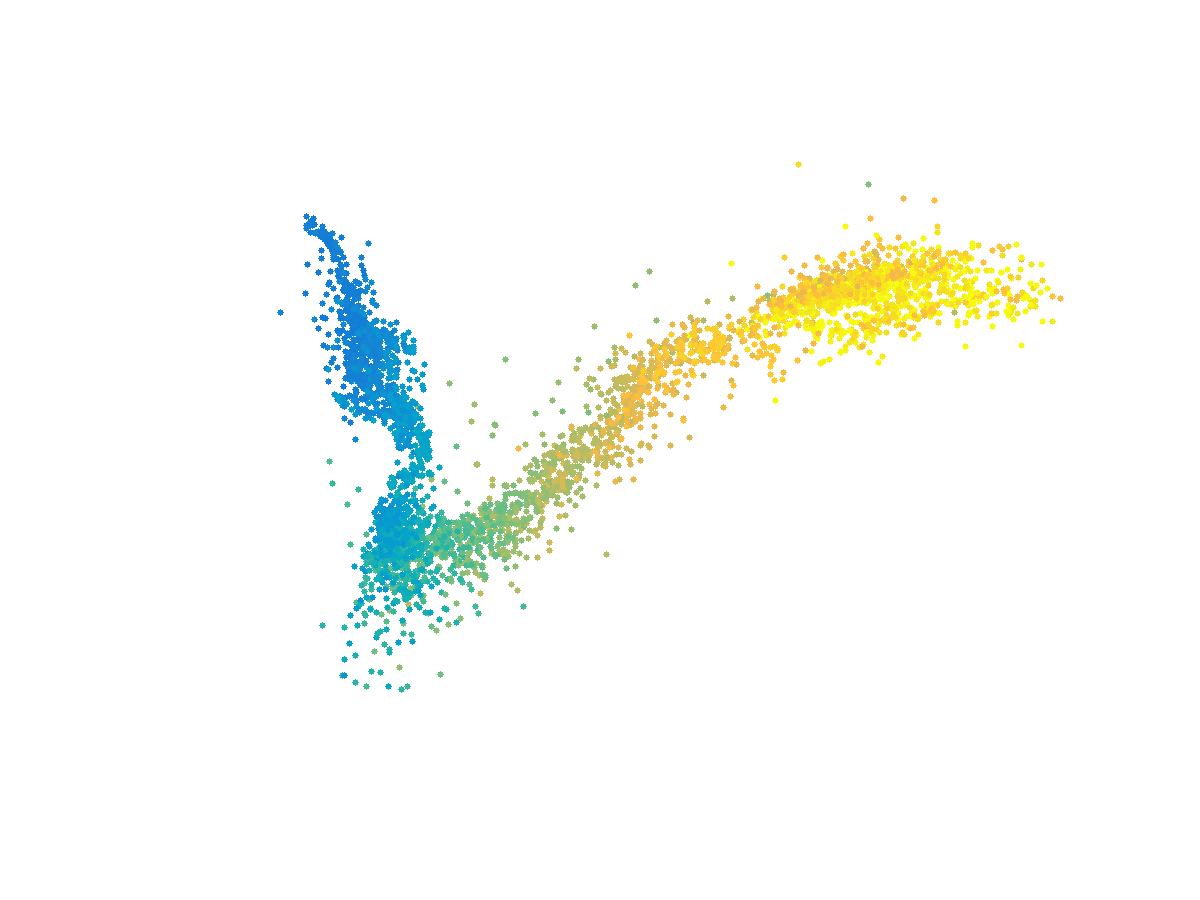}
    \label{fig:sleep_embed_concat_freq_eeg}}
    \hspace{1cm}
    \subfloat[]
    {\includegraphics[scale=0.25]{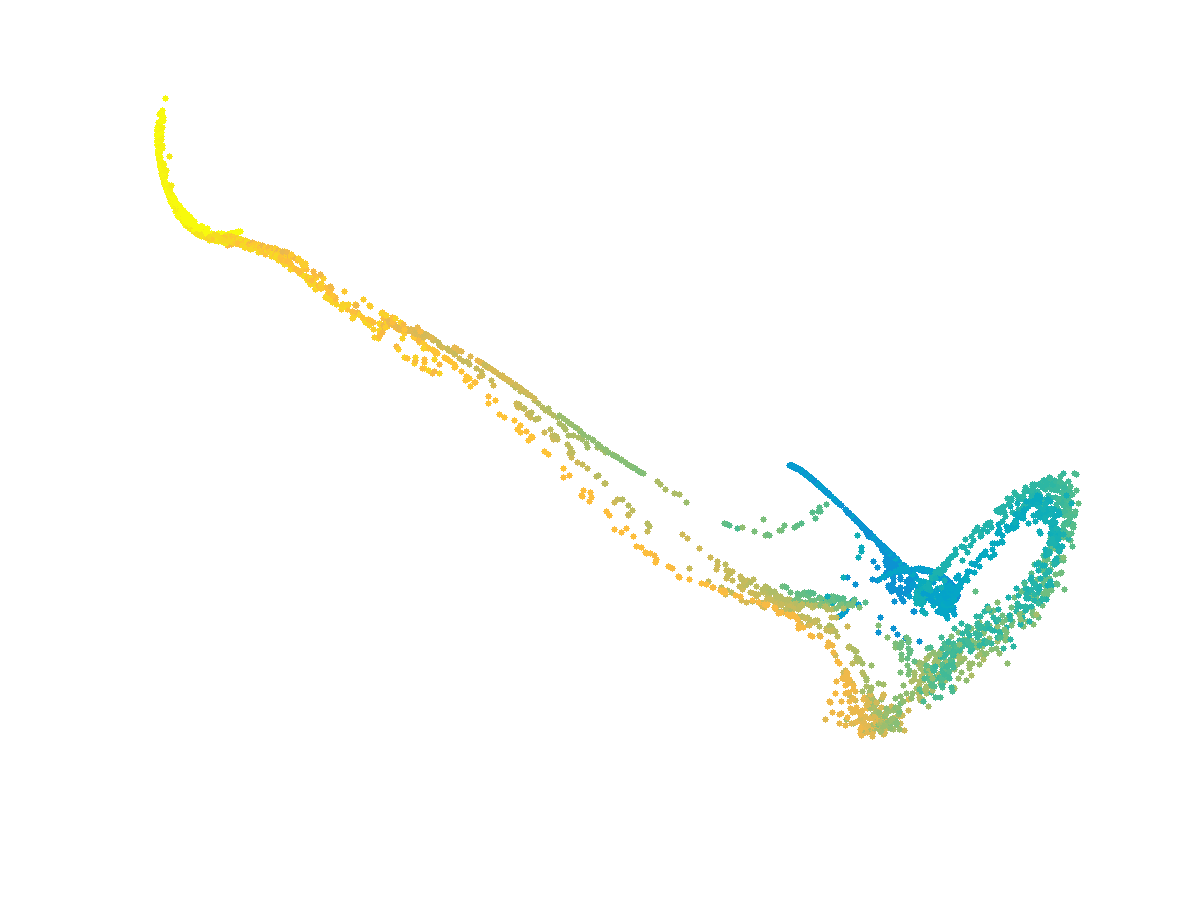}
    \label{fig:sleep_embed_concat_freq_resp}}
    \\
    \subfloat[]
    {\includegraphics[scale=0.25]{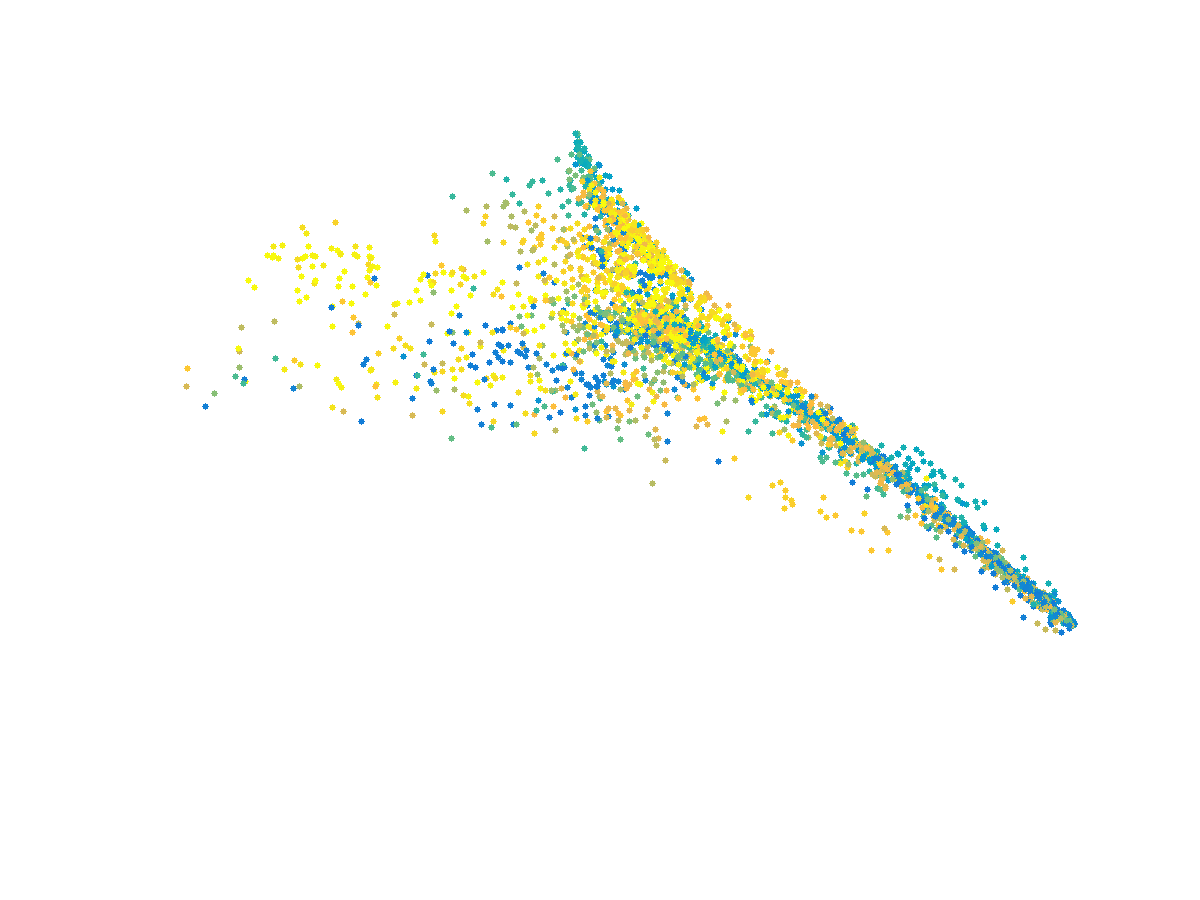}
    \label{fig:sleep_embed_mul_freq_eeg}}
    \hspace{1cm}
    \subfloat[]
    {\includegraphics[scale=0.25]{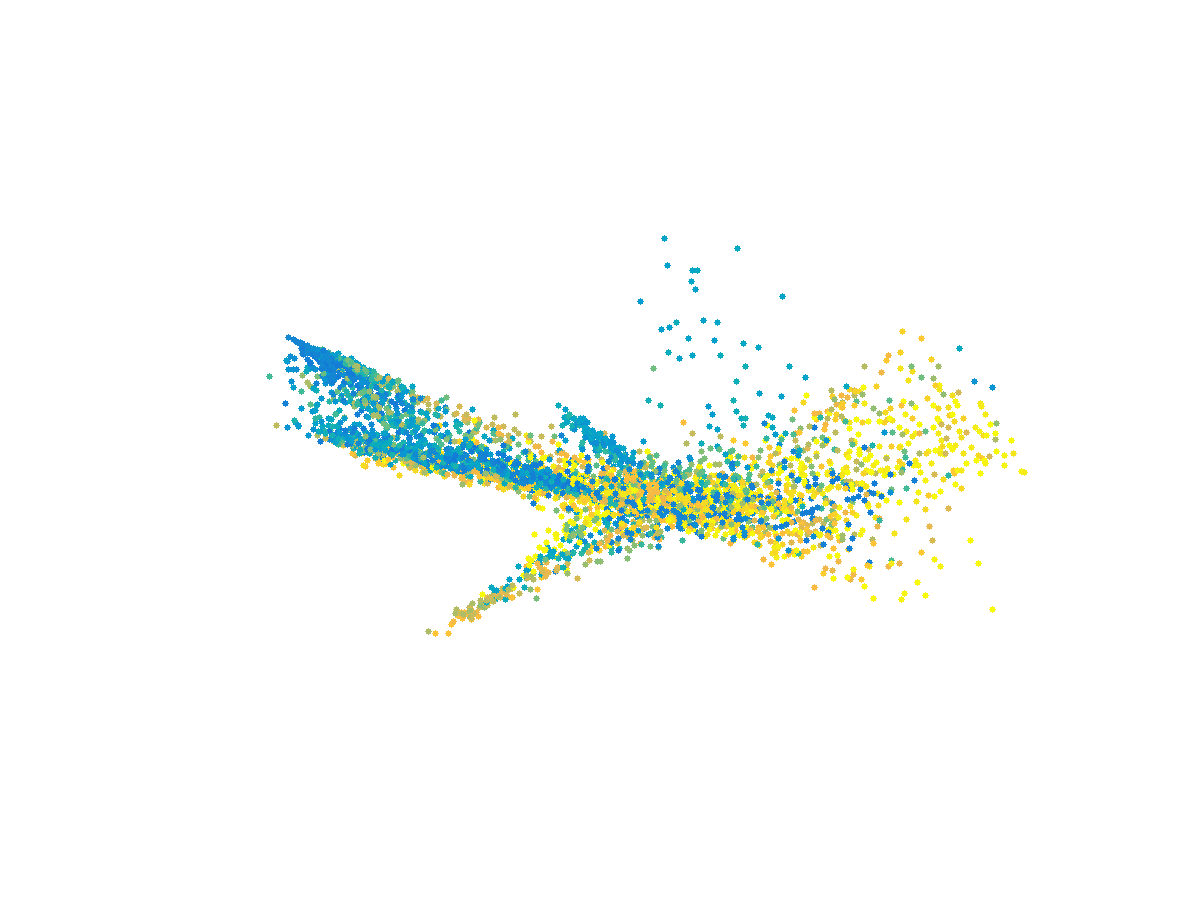}
    \label{fig:sleep_embed_mul_freq_resp}}
    \\
    \subfloat[]
    {\includegraphics[scale=0.25]{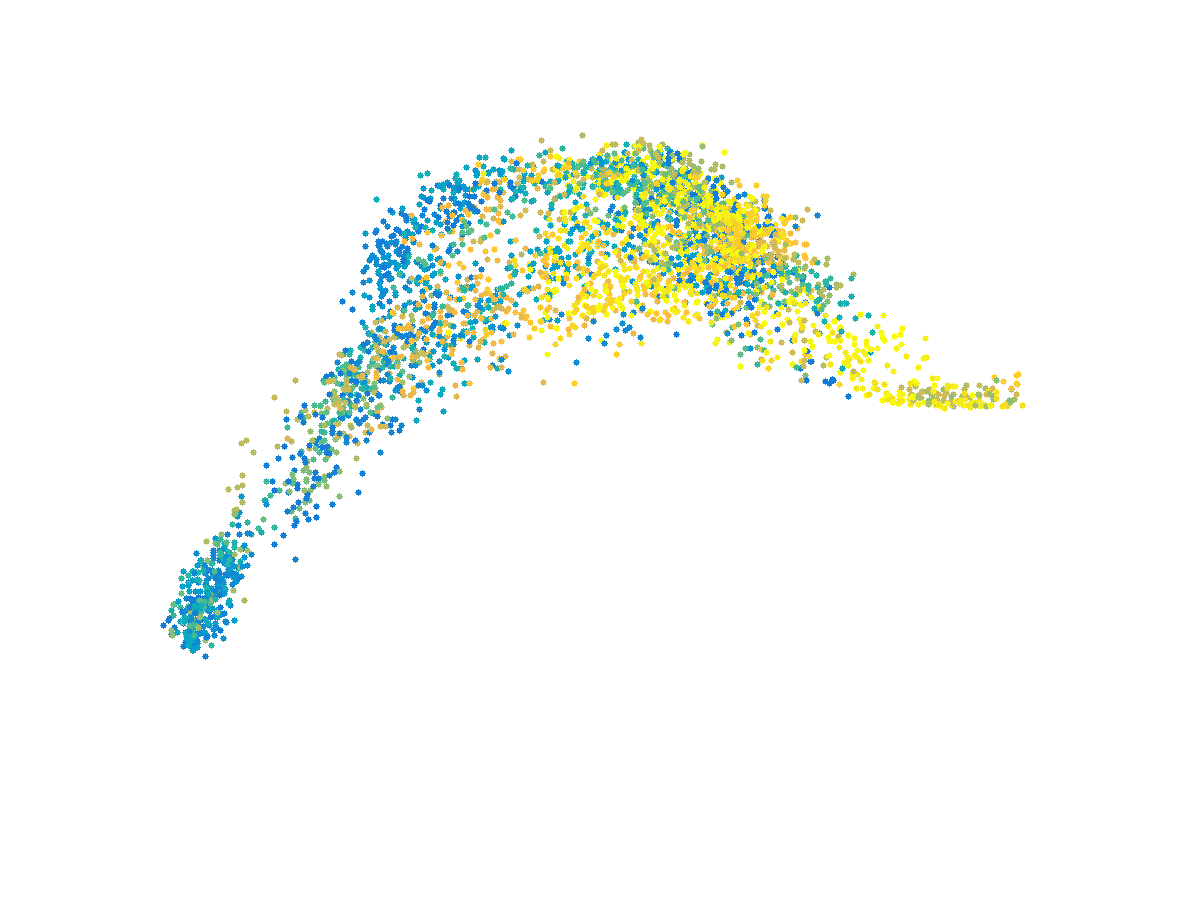}
    \label{fig:sleep_embed_CG_freq_eeg}}
    \hspace{1cm}
    \subfloat[]
    {\includegraphics[scale=0.25]{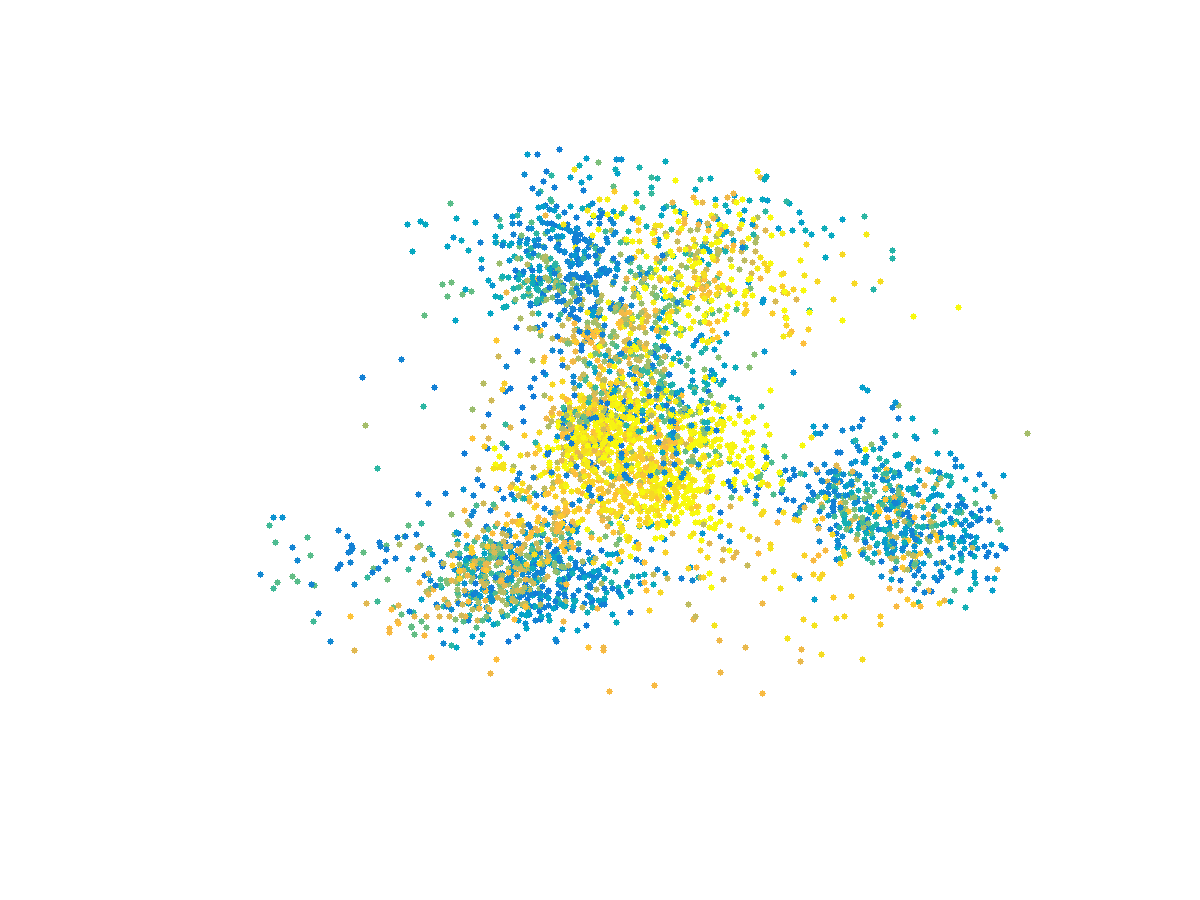}
    \label{fig:sleep_embed_CG_freq_resp}}
    \\
    \caption{The same embeddings as in Figure \ref{fig:sleep_embed}, colored according to the instantaneous frequency of the noise sensor.}
    \label{fig:sleep_embed_freq}
\end{figure}

\section{Conclusions}
\label{sec:Conclusions}
In this paper, we propose a new algorithm for fusing information measured by multiple, multimodal sensors. The primary focus is on a setting in which all sensors observe the same system, but each introduces different variables -- some are related to various aspects of the system of interest, whereas others are sensor-specific and irrelevant.
We present a nonlinear data fusion scheme for suppressing the sensor-specific variables while preserving the system variables measured by two or more sensors. Experimental results demonstrate the applicability of our method to artificial toy problem and to recorded multimodal data for the purpose of sleep stage assessment.

The core of the presented technique is an implementation of an abstract notion of intersection and union of multimodal data sets. While the intersection between two sets is well defined and theoretically explained \cite{talmon2016latent}, the union of two (or more) sets still calls for rigorous analysis. Future work will include such analysis and the development of a union scheme that respects uniqueness.

\bibliographystyle{plain}
\bibliography{papers}

\end{document}